\definecolor{codegreen}{rgb}{0,0.6,0}
\definecolor{codegray}{rgb}{0.5,0.5,0.5}
\definecolor{codepurple}{rgb}{0.58,0,0.82}
\definecolor{backcolour}{rgb}{0.95,0.95,0.92}
 \definecolor{deepblue}{rgb}{0,0,0.5}
\definecolor{deepred}{rgb}{0.6,0,0}
\definecolor{deepgreen}{rgb}{0,0.5,0}
\lstdefinestyle{mypython}{
    backgroundcolor=\color{backcolour},   
    commentstyle=\color{codegreen},
    keywordstyle=\color{magenta},
    emph={self},  
    emphstyle=\color{deepred},    %
    numberstyle=\tiny\color{deepblue},
    stringstyle=\color{codepurple},
    basicstyle=\scriptsize\ttfamily,
    breakatwhitespace=false,        
    xleftmargin=0pt,
    breaklines=true,                 
    captionpos=b,                    
    keepspaces=true,                 
    numbers=left,                    
    numbersep=3pt,                  
    showspaces=false,                
    showstringspaces=false,
    showtabs=false,                  
    tabsize=2,
    language=Python,
    morekeywords={with},              %
}
\definecolor{neuralbackcolour}{rgb}{0.9375, 0.96875, 0.9960}
\lstdefinestyle{neural}{
    backgroundcolor=\color{neuralbackcolour},   
    commentstyle=\color{codegreen},
    keywordstyle=\color{magenta},
    numberstyle=\tiny\color{deepblue},
    stringstyle=\color{codepurple},
    basicstyle=\scriptsize\ttfamily,
    breakatwhitespace=false,
    xleftmargin=0pt,
    breaklines=true,                 
    captionpos=b,                    
    keepspaces=true,                 
    numbers=left,                    
    numbersep=3pt,                  
    showspaces=false,                
    showstringspaces=false,
    showtabs=false,                  
    tabsize=2,
    language=Python
}
\theoremstyle{plain}
\newtheorem{theorem}{Theorem}[section]
\theoremstyle{definition}
\newtheorem{definition}[theorem]{Definition}
\theoremstyle{remark}
\newcommand{\jasonhidden}[1]{}
\newcounter{subproblemno}
\newcommand{\subproblem}[1]{\refstepcounter{subproblemno}\label{#1}}
\title{Neuro-Symbolic Execution of Generic Source Code}
\author{%
 Yaojie Hu \\
  Department of Computer Science\\
  Iowa State University\\
  \texttt{jhu@iastate.edu} \\
  \And
  Jin Tian \\
  Department of Computer Science \\
  Iowa State University\\
  \texttt{jtian@iastate.edu} \\
}
\begin{document}

\maketitle

\begin{abstract}
Can a Python program be executed statement-by-statement by neural networks %
composed according to the source code?
We formulate %
the Neuro-Symbolic Execution Problem and introduce Neural Interpretation (NI), 
the first neural model for the execution of generic source code that allows missing definitions. NI preserves source code structure, where every variable has a vector encoding and every function executes a neural network. %
NI is a novel neural model of computers with a compiler architecture that can assemble neural layers ``programmed'' by source code.
NI is the first neural model capable of executing Py150 dataset programs, including library functions without concrete inputs, and it can be trained with flexible code understanding objectives.
We demonstrate white-box execution without concrete inputs for variable misuse localization and repair.
\end{abstract}

\section{Introduction}

Learning to model computation has fundamental importance. Ideally, neural models of computers should read a snippet of source code and execute it step-by-step. Various models of computers have been proposed, but limitations remain: 
\begin{itemize}[topsep=-5pt,itemsep=-1ex,partopsep=1ex,parsep=1.2ex, leftmargin=2ex]
    \item Large language models (LLM) can be trained  \cite{zaremba2014learning, DBLP:journals/corr/abs-2108-07732} or prompted \cite{ouyang2022training} to predict the output of a program, but they lack the inductive biases to reflect step-by-step execution or function composition.
    \item LLMs can learn to predict step-by-step execution traces, but they may require concrete inputs and cannot execute function definitions without call-sites \cite{nye2022show}.
    \item Some neural networks can learn domain-specific programs \cite{bieber2020learning, chen2021latent}, but only for certain domains.
    \item Algorithms \cite{velivckovic2019neural} and computer architectures \cite{graves2014neural} can be modeled directly by neural networks without source code, but, as a result, they are not ``programmable'' by source code. %
    \item Symbolic execution and abstract interpretation are symbolic models in program analysis \cite{baldoni2018survey}, but they are often not flexible nor expressive for source code in the wild and require manual modeling.
\end{itemize}
Due to these limitations, no neural execution model today can execute \textit{generic} source code.
On the other hand, %
humans can \textit{mentally execute} generic source code robustly without any of these limitations.
One can simulate the program's execution step-by-step top-to-down and follow function compositions (Fig.~\ref{fig:mental}).
When variables and functions are used without contextual definitions, we can guess the meaning based on their names. 
Humans do not execute the program exactly: we may identify an infinite loop from the source code without simulating it infinitely.
The key is to be abstract: partial source code may not be executed concretely, but they are still meaningful,
and humans mentally execute to reason about abstract properties and information (\textit{abstract semantics}), rather than concrete execution traces that are often impossible to predict for generic source code.

\begin{wrapfigure}[8]{R}{0.45\textwidth}
\vspace{-16pt}
\begin{lstlisting}[style=mypython]
# Python
def celsius_to_fahrenheit(celsius):
    fahrenheit = (celsius * 1.8) + 32
    # what if it says "return celsius"?
    return fahrenheit
\end{lstlisting} %
\vskip -1em
\caption{Humans ``mentally execute'' each statement from top to down and infer the meaning of variables by the names.}
\label{fig:mental}
\end{wrapfigure}

This paper aims to develop a neural model of computers capable of executing generic source code like human mental execution, overcoming the limitations above. First, we formalize the problem as the Neuro-Symbolic Execution Problem.
We then introduce Neural Interpretation (NI), a neural model of computers that reads (Python) source code and executes neural instructions to perform abstract reasoning about the program. 
Every function is represented by a neural network, and every variable is represented by a vector, encoding their semantics. %
During execution, instead of a real function call on its arguments, NI executes the neural model of the function on the vector variables.
An external neural memory table associates variables with vector values by their names, the first string-vector representation of computer memory to the best of our knowledge. 
The overall architecture of NI follows compiler design, in order to decompose the source code and assemble neural layers statement-by-statement. 
NI learns to model program semantics such that
given the semantics of a function and its arguments, NI can predict the semantics of the result.

As Neural Interpretation mirrors real execution of a generic symbolic program, it is a step toward building
a unifying bridge between neural and symbolic computation for tomorrow's hybrid computers \cite{marcus2018deep,  goertzel2007artificial}. Symbolic reasoning takes many forms. 
Computer programs in Turing-complete languages are the most general form of symbolic reasoning \cite{church1936unsolvable,turing1938computable}, and source code is used to express rules about everything computable in life,  which NI learns. 
As a neuro-symbolic model \cite{garcez2020neurosymbolic, santoro2021symbolic}, NI is not a black-box \cite{rudin2019stop} and enjoys many symbolic desiderata by design, such as explainability \cite{das2020opportunities, arrieta2020explainable}, factorization/composition \cite{bengio2017consciousness, DBLP:journals/corr/abs-2012-05208}, and chained reasoning \cite{golovneva2022roscoe, shindo2021neuro, nye2022show, wei2022chain}.

Our contributions are summarized as follows: 
\begin{itemize}[topsep=-5pt,itemsep=-1ex,partopsep=1ex,parsep=1.2ex, leftmargin=2ex]
    \item We formulate the Neuro-Symbolic Execution Problem to identify the essential properties 
    for executing generic source code compositionally with learned neural representations.
    \item We introduce Neural Interpretation, the first neural model of computers that reads general-domain source code and executes neural statements step-by-step even with a partial context or missing definitions.
    \item We design the first compiler-architecture neural model of computers with the first string-vector hash map external memory that breaks away from the attention paradigm. %
    \item We propose the Guesser and the Executor that use large language models to predict vector representations for variables and neural networks for functions, the first usage of currying to predict neural networks as model outputs. %
    \item We demonstrate that Neural Interpretation can learn flexibly-defined abstract semantics and, for the first time, perform white-box execution of function definitions without call-sites, demonstrated on a variable misuse localization and repair dataset. %
\end{itemize}
Additionally, we have marked \textbf{sub-problems ($\texttt{SP}$)} encountered, solved, or to be solved. With detailed road map in Appx. \ref{appx:roadmap}, we hope to inspire future approaches. Code and data will be publicly available. %

\section{Related Work}

\paragraph{Symbolic execution and abstract interpretation.} Symbolic execution \cite{king1976symbolic} and abstract interpretation \cite{cousot1977abstract} %
use interpreters to build symbolic models for program analysis \cite{cadar2008klee, wei2019staged}. Symbolic execution uses placeholder symbols to represent variables, and abstract interpretation uses lattice nodes of possible values.
As symbolic models, the two are well-founded on theories, but they do not always work for generic programs in the wild due to common challenges such as dynamic typing \cite{sapra2013finding}. 
They cannot learn from meaningful variable names, and although manual engineering can help, sophisticated methods are needed to handle basic constructs such as floating point numbers \cite{cousot2001abstract, titolo2018abstract}. Lastly, both models face state space explosion problems during reasoning \cite{schmidt1998program, baldoni2018survey}.

\paragraph{Neural models of source code execution.}

Sequence-to-sequence (Seq2Seq) models can directly predict execution outputs from source code \cite{zaremba2014learning, DBLP:journals/corr/abs-2108-07732}. Although Seq2Seq formulation may model Turing-complete programs, function composition and step-by-step execution are not explicit, reflecting the general black-box problem of neural networks \cite{rudin2019stop}.  

To address the black-box problem, models may predict concrete execution traces step-by-step, e.g. Show Your Work with LLMs \cite{nye2022show}.
However, source code in the wild are usually library functions without concrete entry-point inputs (e.g. Fig \ref{fig:mental}), which cannot be modeled. %

Execution of domain-specific source code can be learned, including arithmetic programs \cite{chen2021latent}, graph algorithms \cite{bieber2020learning}, or specific assembly programs for compiler optimization \cite{shi2019learning}.
 However, they do not model general-domain programs in the wild.
 
A Neural Compiler can compile PASCAL code to neural networks through cellular code \cite{gruau1995neural}.

\paragraph{Neural models of computers without source code.} Neural models of computers have been proposed \cite{graves2014neural, graves2016hybrid}, often with a controller network that emulates CPU and attention mechanism for memory, while our paper proposes a novel string-vector hash map memory. Execution of selected algorithms can be learned \cite{velivckovic2019neural, yan2020neural}. But this type of models cannot be ``programmed'' by source code. 

\paragraph{Neuro-symbolic reasoning.} Neuro-symbolic reasoning aims to integrate symbolic reasoning and deep learning \cite{hitzler2022neuro, garcez2020neurosymbolic, marcus2018deep}. 
Domain-specific programs in non-Turing-complete languages can be synthesized by neural networks and executed externally \cite{mao2018the,chen2021spreadsheetcoder}, with the recent highlight to augment language models with tools \cite{mialon2023augmented, schick2023toolformer}. But learning execution is more difficult than synthesis \cite{DBLP:journals/corr/abs-2108-07732}.

\section{The Neuro-Symbolic Execution Problem}

Neuro-Symbolic Execution (NSX) aims to learn an isomorphic mapping between the source code space and neural representation space (Figure \ref{fig:iso}). Through NSX Isomorphism, every source code function can be modeled by a neural network, and every variable can be modeled by a latent vector. The structure of the source code is preserved through the isomorphism, such that the return variable of a source code function execution maps to the result of its corresponding neural network executed on the corresponding argument vectors. That is, executions on the two spaces correspond bijectively.

\begin{wrapfigure}[10]{R}{0.3\textwidth}
\vspace{-17pt}
    \centering
    \includegraphics[width=\linewidth]{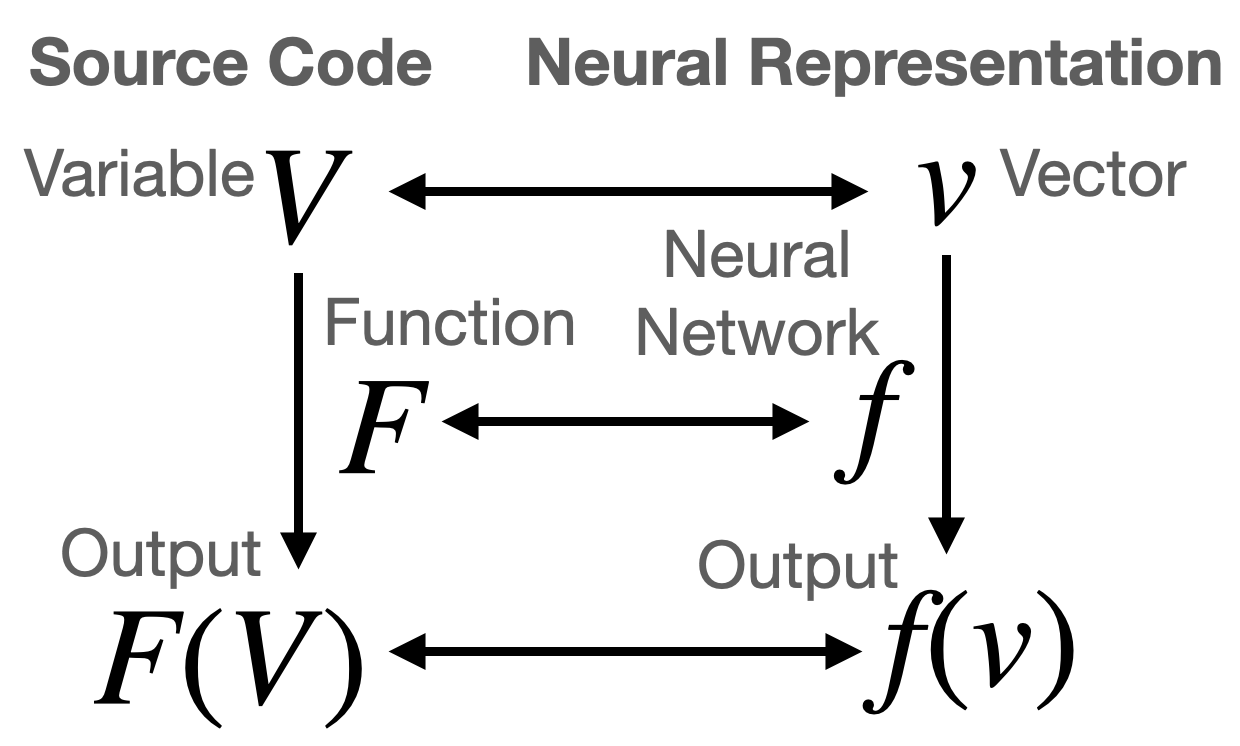}
    \caption{Neuro-Symbolic Execution learns an isomorphism to model source code.}
    \label{fig:iso}
\end{wrapfigure}

\begin{definition}[Neuro-Symbolic Execution Isomorphism, NSXI] \label{def:nsxi}
Let $\mathbf V =\{V_1, V_2, \ldots,  V_N\}$ be the variables and $\mathbf F =\{F_1, F_2, \ldots, F_M\}$ be the functions in some source code $S$, where every function $F_m\in \mathbf F$ has input variables $\mathbf I_m \subseteq \mathbf V$ and output variables $\mathbf O_m \subseteq \mathbf V$. Let $\mathbf v = \{v_1, v_2, \ldots, v_N\}$ be a set of vectors and $\mathbf f =\{f_1, f_2, \ldots, f_M\}$ be a set of neural networks. Let $\alpha$ be a bijective function $\mathbf V \to \mathbf v$ and $\beta$ be a bijective function $\mathbf F\to \mathbf f$. We say $(\mathbf V, \mathbf F)$ and $(\mathbf v, \mathbf f)$ are isomorphic under bijections $\alpha, \beta$ if for every function $F_m \in \mathbf F$ with $F_m(\mathbf I_m)=\mathbf O_m$, we have $f_m(\mathbf i_m)=\mathbf o_m$, where $f_m= \beta(F_m)$, $\mathbf i_m = \alpha(\mathbf I_m)$, $\mathbf o_m= \alpha(\mathbf O_m)$.
\end{definition}

We will call vector $v = \alpha(V)$ the abstract semantics of variable $V$, and neural network $f=\beta(F)$ the abstract semantics of function $F$, inspired by the homonyous concepts in Abstract Interpretation \cite{cousot1977abstract}. %

To learn bijections $\alpha, \beta$, we may fine-tune existing language models \cite{radford2018improving} to create neural representations \cite{mikolov2013distributed} for source code variables and functions by inferring the information they carry and the calculation they perform based on their names. In reverse, we may train classifiers $\alpha^{-1}, \beta^{-1}$ to take a neural representation and classify the source code variable or function. Auto-encoding is a similar task \cite{montero-etal-2021-sentence}, but the additional condition is critical: as $\alpha,\beta$ are structure-preserving mappings, the execution outputs in both spaces need to correspond as well. 
\begin{definition}[Neuro-Symbolic Execution Problem, NSXP]\label{def:NSXP}
The NSXP is to learn bijections $\alpha, \beta$ such that NSXI holds under $\alpha, \beta$ for every source code $S$ in the support of a distribution $\mathbf S$. %
\end{definition}
We will call a solution to the NSXP an NSX model.
Isomorphism serves as a mathematical tool to describe the theoretical ideal of a neural model that faithfully represents source code: bijection implies that the representation is lossless, and isomorphism preserves the execution structure. 
From a computation perspective, the world consists of data and its transformation, and NSX learns to the computational relationships among language concepts.

Once $\alpha, \beta$ are learned, we can model source code execution with neural execution from top to down and, at every step, obtain a latent representation that describes the output variable. That is, Neuro-Symbolic Execution follows source code compositionality.

\begin{theorem}[Compositionality of Neuro-Symbolic Execution]
For an NSX model, let $F, G$ be two source code functions and $A$ be a variable. Then, a compositional function call $G(F(A))$ returns a value with abstract semantics $g(f(a))$, where $a,f,g$ are abstract semantics of $A, F, G$, respectively.
\end{theorem}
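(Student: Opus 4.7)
The plan is to chain two applications of the Neuro-Symbolic Execution Isomorphism (Definition \ref{def:nsxi}), using the intermediate result of $F$ as the input to $G$. The statement is essentially the functoriality of the bijection pair $(\alpha,\beta)$ once the single-step correspondence is granted, so the work is mostly bookkeeping on the isomorphism rather than any deep argument.

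First, I would name the intermediate quantity: let $B$ denote the source-code variable returned by $F(A)$, so that $B \in \mathbf V$ with $F(\{A\}) = \{B\}$. By NSXI applied to $F$, with $\mathbf I = \{A\}$ and $\mathbf O = \{B\}$, we obtain $f(a) = b$, where $a = \alpha(A)$, $b = \alpha(B)$, and $f = \beta(F)$. Next, $B$ is fed into $G$: the call $G(B)$ produces some variable $C \in \mathbf V$, i.e.\ $G(\{B\}) = \{C\}$. Applying NSXI to $G$ yields $g(b) = c$, where $c = \alpha(C)$ and $g = \beta(G)$.

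Now I would substitute: $g(f(a)) = g(b) = c = \alpha(C)$. Since $C = G(F(A))$ by construction, $\alpha(C)$ is by definition the abstract semantics of the value returned by the compositional call $G(F(A))$. This is exactly the claim.

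The only mild obstacle is a presentation point rather than a mathematical one: Definition \ref{def:nsxi} is written in terms of sets $\mathbf I_m, \mathbf O_m \subseteq \mathbf V$, whereas the theorem speaks about unary functions with a single argument and a single return value. I would note explicitly that the unary case is just the singleton instance of the definition, so $\alpha(\{A\}) = \{a\}$ collapses to the scalar correspondence used above; no additional assumption about tupling, ordering, or the treatment of multi-output functions needs to be invoked. With that identification, the proof is a two-line substitution into NSXI.
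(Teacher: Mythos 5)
Your proof is correct. The paper itself states this theorem without an explicit proof, treating it as an immediate consequence of Definition~\ref{def:nsxi}; your two-step chaining --- introduce the intermediate variable $B$ with $F(\{A\})=\{B\}$, apply NSXI once to get $f(a)=b$, apply it again to $G$ to get $g(b)=c$, and substitute --- is exactly the intended argument, and your remark that the unary case is just the singleton instance of the set-valued definition correctly disposes of the only notational wrinkle.
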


\begin{wrapfigure}[19]{R}{0.41\textwidth}
\vspace{-20pt}
    \centering
\begin{lstlisting}[style=neural]
# Neural Interpretation
# line 2, initialize parameter semantics, i.e. guessing
v1 = initialize_vector("celsius")
store_memory("celsius", v1)
# line 3, compute fahrenheit
v2 = lookup_memory("celsius")
v3 = initialize_vector("1.8")
v4 = neural_multiply(v2, v3)
v5 = initialize_vector("32")
v6 = neural_add(v4, v5)
store_memory("fahrenheit", v6)
# line 5, return statement
v7 = lookup_memory("fahrenheit")
return v7
\end{lstlisting}
\vskip -0.5em
\caption{A pseudocode example of Neuro-Symbolic Execution for the \texttt{celsius\_to\_fahrenheit} program in Fig.~\ref{fig:mental}. Neural Interpretation resembles how a computer executes a program given the source code. Neural Compilation signature: $(v_1, v_7)$.}
\label{fig:simple}
\end{wrapfigure}

Compositionality of functions is a fundamental property of computer programs, and Neuro-Symbolic Execution mirrors the entire code structure executed. Notice that, for example, source code $G \circ F$ would define a neural network $g \circ f$, and a change in the order of execution to $F\circ G$ composes an alternative network $f\circ g$. 
For NSX models such as our proposed Neural Interpretation, source code parameterizes the neural model: neural layers representing the statements are ``programmed'' according to the source code.
From this perspective, NSX models are analogous to programmable computers, while specialized algorithm models \cite{velivckovic2019neural} are analogous to early computers that cannot be programmed \cite{atanasoff1984advent}.
Lastly, notice that NSX is more general than concrete execution in the sense that NSX can handle concrete inputs, but concrete execution cannot run if any definition is missing. 

\paragraph{A simple example.} Fig.~\ref{fig:simple} illustrates the pseudocode of Neuro-Symbolic Execution for the \texttt{celsius\_to\_fahrenheit} program in Fig.~\ref{fig:mental}. To preview, Neural Interpretation is our method to achieve Neuro-Symbolic Execution, where \texttt{initialize\_vector} for variables such as \texttt{celsius} is ``guessed'' by the Guesser neural network, and functions such as \texttt{neural\_multiply} are executed by the $\lambda$-Executor neural network.

\paragraph{A neural model with a compiler architecture.} 
How do we build a Neuro-Symbolic Execution model? Translating source code symbol by symbol into neural representations reminds us of what compilers do, except that the translation target is not another lower-level programming language but some ``neural instructions''. Indeed, compiler architecture is a natural choice for NSX models, and Neural Interpretation is the first neural model of computers with a compiler architecture.

\section{Neural Interpretation}

\begin{wrapfigure}[3]{R}{0.6\textwidth}
    \centering
    \raisebox{0pt}[\dimexpr\height-5\baselineskip\relax]{
    \includegraphics[width=\linewidth]{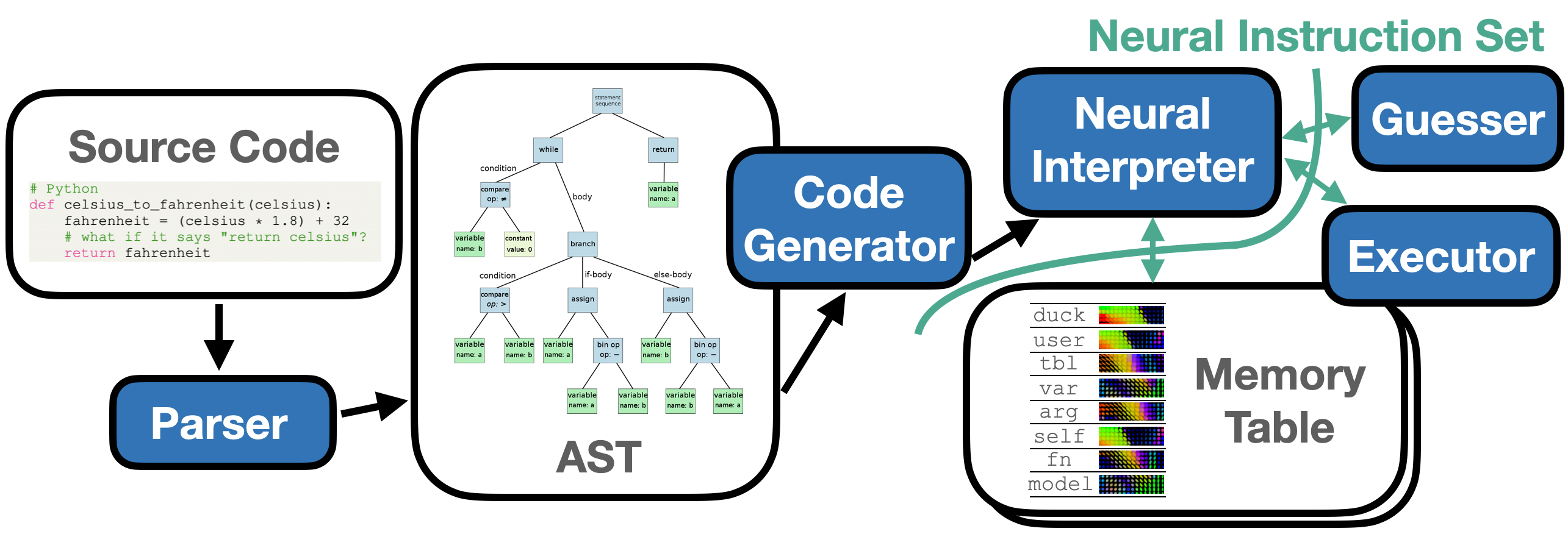}
    }
    \vskip -0.6em
    \caption{The overall architecture of Neural Interpretation.}
    \label{fig:archi}
\end{wrapfigure}

We introduce Neural Interpretation (NI), an NSX model.

\subsection{Architecture Overview}

Neural Interpretation has a compiler/interpreter architecture to parse and model source code compositionally. A compiler translates a higher-level language source code to a lower-level code executed on a machine \cite{aho2007compilers}. NI is special as the lower-level language is a Neural Instruction Set (NIS, Tab. \ref{tab:nis}) executed by a Neural Interpreter. Without loss of generality, the higher-level language of NI is Python, a Turing-complete language for high-level programming.
Other languages can be added (sub-problem \subproblem{sp:new_language}\texttt{SP-\ref{sp:new_language}}, Appx. \ref{appx:language}).

Neural Interpretation has four major components (Fig.~\ref{fig:archi}), following a compiler architecture \cite{aho2007compilers}:
\begin{enumerate}[topsep=-5pt,itemsep=-1ex,partopsep=1ex,parsep=1ex, leftmargin=3ex]
    \item A \textit{Parser} parses a Python source code into an abstract syntax tree (AST). 
    We use an existing open-source parser called Tree-sitter \cite{brunsfeld}.
    \item A \textit{Code Generator} traverses an AST and invokes the Neural Interpreter to perform the language construct (e.g. while-loops, lists, function calls) of each AST node (Python statement). 
    \item A \textit{Neural Interpreter} is the execution environment (virtual machine). The Neural Interpreter keeps track of the variable scopes where each variable has a vector representation. The Neural Interpreter also keeps track of a stack of control flows. When the Code Generator encounters a Python function call, the Neural Interpreter executes a neural network on the vector representations. 
    \item The \textit{Guesser} and \textit{Executor} are two large language models for the abstract semantics of programs. The Guesser initializes vector representations. The Executor models a neural network that can represent Python functions.
\end{enumerate}

Parser and Code Generator handle the syntax and translate a source code (e.g. Fig.~\ref{fig:mental}) into an intermediate code of NIS instructions (e.g. Fig.~\ref{fig:simple}). %
The Neural Interpreter executes NIS instructions with neural abstract semantics.
The Code Generator is designed specifically for Neural Interpretation, but it mostly follows the standard compiler architecture \cite{aho2007compilers} and Python language reference \cite{vanrossum1995python}. NI is multi-threaded for batch execution on GPUs. Engineering details can be found in Appx. \ref{appx:engineering}, including code generation routines for example Python constructs.

\subsection{Two Design Principles}

We anticipate two major sub-problems and introduce the system design principles to address them.

\paragraph{The loop problem and the linearity principle.}
How does Neural Interpretation handle loops (\subproblem{sp:loop}\texttt{SP-\ref{sp:loop}})? One option is to predict the number of times a loop will be executed, and then neurally execute the code block that many times. However, it is not always possible (e.g. Fig. \ref{fig:loopgran}). Another issue is that, if a network predicts that a loop will be executed $L$ times, the prediction $L$ is not differentiable with respect to the result of the execution. Additional learning algorithms such as policy gradient \cite{williams1992simple} may be needed to address this problem, which would make NI overly complicated as a first baseline solution. Alternative designs for the loop problem (\texttt{SP-\ref{sp:loop}}) are explored in Appx. \ref{appx:roadmap}.

\begin{wrapfigure}[10]{R}{0.35\textwidth}
\vspace{-10pt}
    \centering
\begin{lstlisting}[style=mypython]
# Python
def a_loop(total_iter):
    lst = []
    for i in range(total_iter):
        lst.append(i)
    return lst
\end{lstlisting} %
\vskip -0.5 em
    \caption{An example Python program that reflects the loop problem and the list problem.}
    \label{fig:loopgran}
\end{wrapfigure}

When humans read a loop statement, we reason abstractly about the loop condition in relation to the variables that would determine the number of iterations. If we step through the loop in our heads to understand it, once or twice can often suffice.
Given this intuition, we believe NI can sufficiently model the semantics of a loop by adopting the following principle.
\begin{definition}[Linearity principle]
An Neuro-Symbolic Execution model satisfies the linearity principle if it executes every statement in the source code once,
and when a control flow is encountered, each of its branch is executed once.
\end{definition}
Following the linearity principle, NI does not loop, as it only executes a loop block once. When a control flow is encountered, the control flow condition is neurally executed to predict a context vector. The context vector(s) is given to the Executor to represent the control flow of a call. %
By adopting the principle, we assume a single statement execution in a control flow can obtain the correct semantics given the contextual encoding.
As a result, NI has linear time complexity with respect to the length of the source code, formally stated in Appx. \ref{appx:complexity}. The code generation routine is given in Appx. \ref{appx:engineering}.

\paragraph{The list problem and the named representation principle.}
How does Neural Interpretation represent lists (or sets, tuples, etc.) (\subproblem{sp:gran}\texttt{SP-\ref{sp:gran}})?
One option is to represent every element in a list with a vector. This would require us to know the length of the list ahead of time, which is not always possible (e.g. Fig.~\ref{fig:loopgran}). Moreover, the memory consumption would scale with the length of the list, which is not ideal if all elements have similar abstract semantics that could be summarized succinctly with one representation. 
We believe NI can represent the semantics of a list with one vector and adopt the named representation principle.
\begin{definition}[Named representation principle]
An Neuro-Symbolic Execution model follows the named representation principle if its memory holds a constant-length vector representation for every variable that has a name and has been encountered during the execution.
\end{definition} 
For example, if \texttt{lst} is a list, it has a vector representation by the variable name ``lst''. \texttt{lst[0]}, the first element of \texttt{lst}, has a derived name, so it does not have its own vector representation stored in the Interpreter memory. When a representation for \texttt{lst[0]} is needed, the Executor computes it on-demand with a built-in \texttt{list\_index} function (see Appx.~\ref{appx:build_in}).  
By adopting the principle, we assume that a single vector can summarize the commonality of elements in a list.
As a result, NI has linear space complexity, formally stated in Appx. \ref{appx:complexity}.
The code generation routine is in Appx. \ref{appx:engineering}.

Note that the space and time complexity of NI does not follow the space and time complexity of the actual execution. To see this, consider an infinite loop that appends an element to a list at every iteration (Fig.~\ref{fig:loopgran}): the time and space complexity would be unbounded w.r.t. source code length. Like humans' mental execution, Neural Interpretation reasons abstractly about the semantics of a program.

\subsection{Neural Interpreter and Neural Instruction Set}
In Neural Interpretation, we introduce a Neural Instruction Set (NIS) as a complete interface for the Neural Interpreter to invoke neural computations and representations (Tab. \ref{tab:nis}). 
The Code Generator iterates through the AST and performs Neural Instructions based on the type of the AST node. Normal CPU instruction sets could contain many low-level instructions (e.g. x86-64 \cite{intel2013intel}). The success of NIS shows that high-level computational reasoning may only require a small set of building blocks.

\begin{wraptable}[11]{R}{0.55\textwidth}
\vspace{0pt}
    \caption{A high-level Neural Instruction Set (NIS) exposes neural representations to the Neural Interpreter.}
    \label{tab:nis}
    \centering
    \resizebox{\linewidth}{!}{
    \def\arraystretch{1.2}
    \begin{tabular}{p{0.5in} p{2.8in}}
        \toprule
        Instruction & Summary \\
        \midrule
        \texttt{lookup} & Lookup a vector by name in all available scopes. \\
        \texttt{store} & Store a (variable name, vector) pair to the top scope. \\
        \texttt{guess} & Provided by Guesser, initializes the vector representation for each variable identifier from source code. \\
        \texttt{lambda} & Provided by Executor, executes the neural network representing a Python function. \\
        \bottomrule
    \end{tabular}
    }
\end{wraptable}

The Neural Interpreter has a stack of neural memory tables that keep track of the current variable scopes per Python scoping rules. Each variable has a vector representation, accessed through \texttt{lookup} and \texttt{store} instructions. Instead of a heap or a stack often used by a low-level execution environment, the variables in the Interpreter are stored in a hash table with string keys representing their names and vector values representing their current vector semantics.
The \texttt{guess} and \texttt{lambda} instructions will be defined next with the Guesser and Executor networks. 

If the Neural Instruction Set has no neural components, our Neural Interpreter may become a normal interpreter instead. NIS provides a necessary and complete neural interface for 
Neuro-Symbolic Execution of generic programs. 
We believe that languages other than Python can reuse the same NIS and possibly even the same learned representations, because program semantics can be similar while syntaxes differ across languages (\texttt{SP-\ref{sp:new_language}}). Alternative NIS design is possible (\subproblem{sp:nis}\texttt{SP-\ref{sp:nis}}).
 
\subsection{Guesser to Initialize Representations}

Guesser is a neural network that predicts a vector for each expression (and identifier) in the source code (Fig. \ref{fig:guesser}). For a variable, the guessed vector initializes its abstract semantics. For a function, the guessed vector is the signature vector that parameterizes its abstract semantics network (Eq. \ref{eq:currying}).

\begin{wrapfigure}[12]{R}{0.45\textwidth}
\vspace{-20pt}
\centering
\begin{mdframed}[backgroundcolor=backcolour,leftmargin=0cm,hidealllines=true,%
  innerleftmargin=0cm,innerrightmargin=0cm,innertopmargin=-0.72cm,innerbottommargin=-0.10cm]
\begin{lstlisting}[style=mypython, escapechar=\%]]
# Python
def %\underline{celsius\_to\_fahrenheit}%(%\underline{celsius}%):
    %\underline{fahrenheit} = \underline{\underline{(\underline{\underline{celsius} * \underline{1.8}})} + \underline{32}}% 
    # what if it says "return celsius"?
    return %\underline{fahrenheit}%
\end{lstlisting} %
\end{mdframed}
    \vskip -1.2em
\caption{A guesser predicts a vector representation for every expression (including identifiers), indicated by an \underline{underline} in this example.
The same variable at different locations can have different representations, similar to the static single-assignment form \cite{cytron1989efficient, appel1998ssa, ananian2001static}.%
}
    \label{fig:guesser}
\end{wrapfigure}

Formally, given a source code string $S$ that contains expressions (including identifiers) $E_1, E_2, ... E_N$, the Guesser $\gamma$ outputs $N$ vectors of length $H$:%
\begin{align}
    \mathcal \gamma(S) = [e_1, e_2, ... e_N]. \label{eq:guesser}
\end{align}
Each vector $e_n$ initializes the guessed abstract semantics of the corresponding identifier or expression. The Guesser $\gamma$ implements \texttt{guess} in the NIS (Tab. \ref{tab:nis}).

We implement the Guesser with Transformer architecture \cite{vaswani2017attention, devlin2018bert}.
The Transformer outputs one vector encoding for each token in the source code string, and a guessed vector is produced by max-pooling all corresponding token encodings \cite{gage1994new, kudo2018sentencepiece}. Pooling occurs on-demand as the Executor uses an argument, and if the argument has a previously executed vector, the guessed vector is concatenated to enrich Executor input in practice. Details about Guesser pooling are provided in Appx. \ref{appx:engineering}.

Neural Interpretation is about execution, but guessing (a Guesser-like component) is unavoidable to model generic source code with missing definitions, context, and library dependency. NI is a program (Fig.~\ref{fig:simple}), and like all programs, variables need to be initialized before use (for more, see Appx. \ref{appx:missing}).

\subsection{$\lambda$-Executor to Represent and Execute Functions}

The Executor learns to model a function $F$ by outputting its abstract semantics $f$. By NSX Isomorphism, given the abstract semantics $f$ of the function $F$ and $a$ of the arguments $A$, the abstract semantics of the return variable should be correctly modeled as $f(a)$.

Neural Interpretation represents every Python function with a neural network parametrized by a signature vector. How do we represent a Python function with a unique neural network, especially as the potential %
number of Python functions is infinite (\subproblem{sp:infinite_fns}\texttt{SP-\ref{sp:infinite_fns}})? What happens if the function definition is missing from source code (\subproblem{sp:missing_fn_def}\texttt{SP-\ref{sp:missing_fn_def}})? We propose the $\lambda$-Executor to address both issues.

\paragraph{$\lambda$-Executor predicts neural networks through currying.}
Our Executor implementation, $\lambda$-Executor, is a neural network that predicts another neural network $f$ given vector signature $\theta_f$, where $f$ represents function $F$ as its abstract semantics through currying \cite{curry1958combinatory}. 
Formally, $\lambda$ is a neural network that takes signature $\theta_f$ of $F$ and abstract semantics $a$ of arguments $A$ and predicts the returned abstract semantics $r$ for $F(A)$:
\begin{align}
    \lambda(\theta_f, a) = r.
\end{align}
$\lambda$ can output function $f$ from $\theta_f$ through currying:
\begin{align}
    f & = \lambda(\theta_f), \label{eq:currying}\\
    f(a) & = \lambda(\theta_f, a)=r. \label{eq:return}
\end{align}
$\lambda$-Executor, a single neural network, can predict a neural network $f$ given any Python function $F$. 
If $F$ is not defined in the source code, $\theta_f$ is obtained from the guessed representation. As a result, neural models of Python functions are predicted on-demand and share parameters efficiently.

We implement $\lambda$-Executor with a Transformer architecture. A sequence input containing $\theta_f$, $a$, and control flow context is fed into the Transformer. Input $\theta_f$ achieves currying with a role similar to task prefixes of T5 Transformers \cite{raffel2020exploring}.
$\lambda$-Executor implements \texttt{lambda} in the NIS (Tab. \ref{tab:nis}). For more on Executor, see Appx. \ref{appx:engineering}.

\paragraph{Neural compilation of function definitions.}
How does NI represent functions if their definitions are available in the source code (\subproblem{sp:have_fn_def}\texttt{SP-\ref{sp:have_fn_def}}, e.g. Fig.~\ref{fig:mental})? One approach would be to substitute the parameters with call-site arguments and execute the definition line-by-line, like a real Python interpreter or inline expansion \cite{serrano1997inline}. However, this would break the linearity principle, as a line in a function definition for $F$ could be executed twice if $F$ is called twice. Also, how would recursive functions be executed?

If a function $F$ is defined in the source code, NI translates it once to its signature $\theta_f=(p,r)$ %
through \textit{neural compilation}: perform Neural Interpretation on $F$ starting from its parameters $P$ to its return variable $R$ and store the signature $\theta_f = (p, r)$ in the Interpreter memory to represent $F$. When $F$ is invoked, compiled signature $\theta_f$ plugs into $\lambda$-Executor to obtain $f$ by currying (Eq. \ref{eq:currying}). To neurally compile a recursive function $F$, function calls of $F$ within definitions of $F$ use guessed abstract semantics, as if the definition is absent. NI's neural compilation adheres to the linearity principle and translates a function definition to a vector signature.

\subsection{Training Neural Interpretation}
Methods introduced so far perform Neural Interpretation of generic source code. What is the ground truth to train NI (\subproblem{sp:ground_truth}\texttt{SP-\ref{sp:ground_truth}})? 
We believe that language modeling objectives are not gold standards to train NI, as the same variable can carry different information at different locations (see Appx. \ref{appx:ground}).
Instead, we may design loss functions to express NSXI and other code understanding objectives. As shown in Sec \ref{sec:learning} and \ref{sec:misuse}, NI is general-purpose and can learn different code understanding objectives flexibly.

\section{Learning to Execute Programs Abstractly}

As a general neural program execution framework, Neural Interpretation has great potential to be applied to a variety of program understanding problems.
We aim to demonstrate the essential results such as differentiable training, flexible objectives, and white-box execution, in order to establish the first baseline Neuro-Symbolic Execution model.

\subsection{Learning to Model the Execution of Generic Python Code}

\label{sec:learning}

\paragraph{A generic Python dataset.} The ETH Py150 Open dataset (Py150) \cite{kanade2020learning, raychev2016probabilistic} consists of generic Python scripts from open-source Github projects. As most programs in the wild are library methods, they often have no concrete entries such as unit tests, and they are often not self-contained due to dependency requirements. For this reason, concrete execution traces may be unobtainable, and there have been no successful execution models for Py150 programs. Neural Interpretation is the first neural model to demonstrate successful execution on Py150 dataset, due to its unique ability to use robust and generic abstract semantics representations in lieu of concrete traces, opening up the new possibility of execution-based learning on large generic code corpus.

The training, validation, and test sets of ETH Py150 Open dataset contain 74749, 8303, 41457 programs. We filter out around 13.3\% programs longer than 10000 characters, to prevent out-of-memory errors as a standard practice \cite{devlin2018bert, dong2019unified}, and around 1.5\% programs with code generation errors 
when NI encounters some edge cases of Python language constructs that we chose not to implement in order to reduce engineering complexity. See Appx. \ref{appx:dataset} for filtering and truncation statistics.

\begin{wrapfigure}[12]{R}{0.6\textwidth}
\vspace{-15pt}
    \centering
    \includegraphics[width=\linewidth]{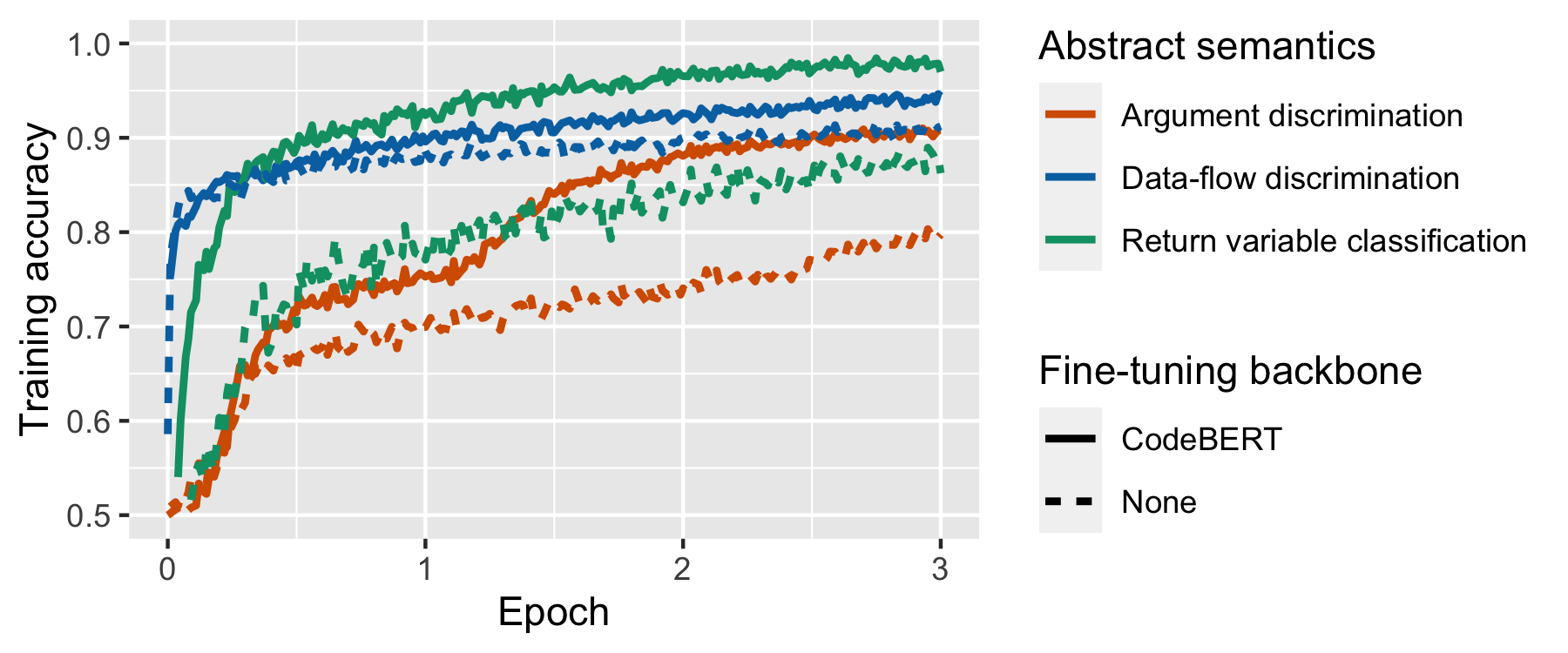}
    \vskip -0.7em
    \caption{Training accuracy versus epochs while Neural Interpretation learns the abstract semantics objectives on the ETH Py150 Open dataset.}
    \label{fig:three}
\end{wrapfigure}

\paragraph{Abstract semantics selection.} 
NI can flexibly encode different program properties through abstract semantics. We select three important code understanding properties related to NSXI to train NI on Py150.
NI is performed for the entire source code snippets in a batch before loss is computed on applicable objects. The overall loss $L$ is the sum of the following cross-entropy losses with $L=L_1+L_2+L_3$. Detailed loss definitions are in Appx. \ref{appx:loss}.

\textit{Return variable classification loss $L_1$.} NI is trained to classify the return variable of an executed statement, as required by Neuro-Symbolic Execution Isomorphism (Def. \ref{def:nsxi}). For example, \texttt{celsius\_to\_fahrenheit(25)} is likely to be assigned to variables called \texttt{fahrenheit} or \texttt{converted}, but not \texttt{duck} or \texttt{table}. 
Let $A\leftarrow E$ be an assignment, where $A$ is a left-hand-side variable, and $E$ is a right-hand-side expression to assign. Given the executed result $e$ of $E$ and the guessed representations of $A$ and $K-1$ randomly selected left-hand-side variables $\mathbf {LHS} = \{A,A_1,...A_{K-1}\}$, NI learns to classify $A$ in $\mathbf {LHS}$ through cross-entropy.

\textit{Argument discrimination loss $L_2$.} NI is trained to recognize a function call with randomly replaced arguments, as NSXI only holds for valid function calls. For example, the returned abstract semantics of \texttt{celsius\_to\_fahrenheit(duck)} is unclear. The return value from a real function call is labeled True, and the return value from the same call with an argument replaced by a randomly sampled object is labeled False. Binary cross-entropy (BCE) is used.

\textit{Data-flow discrimination loss $L_3$.} NI is trained to recognize data-flow paths \cite{liu1998simple,ferrante1987program}, which can differentiate two variables of the same name that language modeling objectives cannot (Appx. \ref{appx:ground}). For example, \texttt{fahrenheit} can likely be computed from variables \texttt{celsius}, \texttt{temperature}, or \texttt{thermometer}, but not \texttt{passport} or \texttt{duck}. 
A pair of objects with a data-flow path is labeled True, and a pair of randomly sampled objects without data-flow is labeled False. BCE is used.

\begin{wraptable}[9]{R}{0.5\textwidth}
\vspace{-15pt}
    \centering
\caption{Accuracy for Neural Interpretation learning the three abstract semantics objectives.}
\vskip -0.5em
\label{tab:three}
\resizebox{\linewidth}{!}{
\begin{tabular}{cclll}
\toprule
\multirow{2}{*}{Fine-tuning} & \multirow{2}{*}{Abstract semantics} & \multicolumn{3}{c}{Accuracy} \\
                             &                                     & Train    &  Valid.   &   Test   \\ \midrule
\multirow{3}{*}{CodeBERT}   &  Return variable classification      & 95.63    &  96.25   &   95.57  \\
                            &    Argument discrimination           & 90.57    &  91.24   &   90.75 \\
                            &    Data-flow discrimination          & 94.56    &  94.59   &   94.67 \\ \midrule
\multirow{3}{*}{None}   &  Return variable classification          & 86.37    &  86.39   &  85.89 \\
                            &    Argument discrimination           & 79.66    &  79.85   &   79.79 \\
                            &    Data-flow discrimination          & 91.67    &  91.54   &   91.78 \\ \bottomrule
\end{tabular} 
}
\end{wraptable}

\paragraph{NI performance.} We compare training NI with or without fine-tuning. We first train NI by fine-tuning both Guesser and Executor from CodeBERT \cite{feng2020codebert} large language model pre-trained on code corpus. We then train NI from randomly initialized weights of the same network. For both settings, the three abstract semantics objectives are trained jointly on filtered ETH Py150 Open dataset for 3 epochs, completed within 7 hours on a consumer GPU. Batch size is 16. Fig. \ref{fig:three} plots NI accuracy during training. The final training, validation, and testing accuracy are presented in Tab. \ref{tab:three}.

We see from Fig. \ref{fig:three} that Neural Interpretation can learn to model program execution through back-propagation. As all three objectives are constructed with balanced datasets, Tab.~\ref{tab:three} shows non-trivial validation and testing accuracy on all three abstract semantics definitions, proving that learning of Neuro-Symbolic Execution Problem is feasible with our Neural Interpretation model, and the learning objectives can be flexibly defined. Aligned with the standard conclusion from LLM literature \cite{radford2018improving, brown2020language}, we find that fine-tuning improves NI performance. 
Learning NSXP is possible. NI shows that we can factorize a source code into a computational graph and model each computation individually, a distinctive ability that standard LLMs do not have with sequence models of source code.

\subsection{White-Box Variable Misuse Localization and Repair}
\label{sec:misuse}

We show that by executing a buggy source code, NI can localize and repair variable misuses.

\paragraph{Dataset and task.} 
CuBERT \cite{kanade2020learning} proposes ``variable misuse localization and repair'' dataset containing single Python function definitions, where a variable at some locations is replaced by another variable that is defined within the function as a misuse. 
Following \cite{kanade2020learning, vasic2019neural}, we propose to use this dataset as a four-step \textit{white-box variable misuse localization and repair} task designed to evaluate Neuro-Symbolic Execution models. Given a function with a variable misuse, the model needs to perform four steps. 1) \textit{Code classification:} Given the code, classify whether a misuse exists. 2) \textit{Call localization:} Given the return vectors of all function calls, localize the call with a misused argument. 3) \textit{Argument localization:} Given a call with a misused argument, localize the misuse. 4) \textit{Repair:} Select the correct variable given the return value of the localized call by substituting the misused argument with each variable in scope. 
This task evaluates white-box execution with NSX models as all four steps can be performed by executing individual function calls. The NSX model needs to learn to return abstract semantics vectors that distinguish authentic and misused arguments.

\paragraph{White-box variable misuse localization and repair with Neural Interpretation.}
For \textit{code classification}, a BERT-style attention layer takes the return values of all function calls and produces a binary classification whether a misuse exists from the CLS output \cite{devlin2018bert}.

\begin{wraptable}[8]{R}{0.38\textwidth}
\vspace{-10pt}
\caption{NI accuracy for white-box variable misuse localization and repair.}
\vskip -0.5em
\label{tab:misuse}
\resizebox{\linewidth}{!}{
\begin{tabular}{cccc}
\toprule
Task Step             & \multicolumn{3}{c}{Accuracy} \\
                      & Train    & Valid.   & Test   \\
                      \midrule
Code Classification   & 94.46    & 93.45    & 93.53   \\
Call Localization     & 94.10    & 92.97    & 93.13  \\
Argument Localization & 98.90    & 97.86    & 97.83  \\
Repair                & 94.02    & 92.81    & 92.97  \\
\bottomrule
\end{tabular}
}
\end{wraptable}

For \textit{call localization}, we define a contaminated function call with a misuse variable as an argument, or one with a contaminated return value as an argument, recursively. 
For training only, a one-layer neural network learns to classify whether a function call is contaminated given its return value. For both training and inference, an attention layer takes the return values from all function calls and classifies the first contaminated call with the original misused argument.

For \textit{argument localization}, a one-layer neural network takes the Transformer output vector $a'$ for each argument $a$ to classify the misused argument. Note that in all other tasks, $a'$ is discarded, as only the return vector $r$ is kept as the execution result (Eq. \ref{eq:return}). 

For \textit{repair}, like a debugger, we pause when the misused call is executed to collect all variables $\mathbf o$ in the Interpreter memory, among which the correct argument exists. We substitute the misused argument with each variable $o\in \mathbf o$ and compute the $\lambda$ call to obtain the return values, which are given to a one-layer network to classify the correct argument.
Appx. \ref{appx:loss} defines the loss functions in detail.

\begin{wraptable}[7]{R}{0.45\textwidth}
\vspace{-14pt}
\caption{NI performance compared to LSTM and CuBERT baselines.}
\vskip -0.5em
\label{tab:baseline}
\resizebox{\linewidth}{!}{
\begin{tabular}{cccccc}
\toprule
Model    & Epochs & \multirow{2}{1.7cm}{\centering Classification Accuracy} & \multirow{2}{1.5cm}{\centering Localization Accuracy} & \multirow{2}{1.5cm}{\centering Loc+Repair Accuracy} \\\\
\midrule
LSTM                   & 100        & 79.30                   & 64.39                 & 56.89               \\
CuBERT                 & 2          & 94.87                   & 91.14                 & 89.41               \\
\midrule
\textbf{NI}                   & 2           & 94.33           & 93.47                 & \textbf{90.43}               \\
\bottomrule
\end{tabular}
}
\end{wraptable}
\paragraph{NI performance.}
Neural Interpretation is initialized with CodeBERT \cite{feng2020codebert} Guesser and CodeGPT \cite{lu2021codexglue} Executor, and it is fine-tuned for 2 epochs (60 hours). Batch size is 64. Filtering and truncation are applied. We exclude around 0.7\% scripts with Code Generator errors, and we exclude 0.5\%
programs whose misuse variable is not a function argument and thus cannot be detected by execution (filtering details in Appx. \ref{appx:dataset}).
The performance is presented in Tab. \ref{tab:misuse}, and comparison with baseline models are in Tab. \ref{tab:baseline}. %
NI shows competitive ability to solve variable misuse localization and repair. By executing a buggy source code, even when concrete inputs are missing, a variable misuse can be identified and fixed. NI's overall localization and repair accuracy is higher than that of the baseline CuBERT model, with significantly higher localization accuracy alone. Note that NI is fine-tuned from models trained for language modeling objectives, and if pre-trained for execution-based objectives, NI's fine-tuning performance may further improve in future work.
Lastly, the white-box execution emulates a program debugger, representing variables and their abstract semantics values at every step, which can explain a predicted misuse by tracing the source function call. In contrast, standard LLMs do not model execution and cannot explain their predictions by default, as sequence models of code discard the program structure.

\section{Conclusions and Discussions}

Neural Interpretation is a new model of computers that models step-by-step execution of source code. It is the first to achieve the following five desiderata at the same time: 1) neural representation learning from name semantics; 2) white-box step-by-step execution; 3) execution of partial source code  without concrete inputs; 4) execution of general-domain source code; 5) composition of neural networks ``programmed'' according to the source code.

\bibliography{my}
\bibliographystyle{plain}

\newpage
\appendix

{\large Supplementary Material to \textit{Neuro-Symbolic Execution of Generic Source Code}}

\section{A road map of Neuro-Symbolic Execution sub-problems}
\label{appx:roadmap}

We list the sub-problems of the Neuro-Symbolic Execution Problem that are addressed by the Neural Interpretation Model.
\begin{itemize}[topsep=-5pt,itemsep=-1ex,partopsep=1ex,parsep=2ex, leftmargin=2ex]
    \item \texttt{SP-\ref{sp:new_language}}. \textit{Language support of Neural Interpretation.} We have selected Python to be the language supported by NI, without loss of generality. We choose Python because the design of the language emphasizes semantics with dynamic typing and duck typing, which are difficult to analyze with symbolic models. In practice, Python's language design encourages developers to rely on semantically meaningful names to express and document program logic, rather than formal channels such as static type checking. Python's language design aligns with our goal to model source code semantics. Also, Python is the second most popular language for open-source repositories (GitHub statistics). Other languages can be supported (see Appx. \ref{appx:language}). 
    \item \texttt{SP-\ref{sp:loop}}.\textit{ How does Neural Interpretation handle loops?} We address this problem with the linearity principle. Alternatively, an execution model may predict the exact number of times a loop will be executed. This requires an assumption that concrete input is given and achieves a ``Concrete Neural Execution'', which is a special case of the Neuro-Symbolic Execution Problem studied in all existing neural execution work, to the best of our knowledge.
    \item \texttt{SP-\ref{sp:gran}}. \textit{How does Neural Interpretation represent lists?} We address this problem with the named representation principle. Alternatively, an execution model may attempt to represent every element in a list (dictionary, set, tuple ...) with a referenced vector. The advantage is that the representation is more detailed. The disadvantage is that this may not be possible without context, and it may consume too much memory.
    \item \texttt{SP-\ref{sp:nis}}. \textit{NIS design.} We design the NIS as an interface for the Neural Interpreter to invoke neural representations \ref{tab:nis}. Our NIS is general and can be used for other languages. However, our NIS may be most suitable for object-oriented programming languages like Python. 
    \item \texttt{SP-\ref{sp:infinite_fns}}. \textit{How do we represent a Python function with a unique neural network, especially as the number of Python functions is infinite?} We think that predicting neural networks with a neural network is a general problem with applications other than Neuro-Symbolic Execution Problem. We solve this by currying (Eq. \ref{eq:currying}): a single neural network takes a vector function abstract semantics as a partial input. Infinite number of possible vectors that are predicted on-demand from source code functions, which avoids potentially infinite parameters needed. 
    \item \texttt{SP-\ref{sp:missing_fn_def}}. \textit{What happens if the function definition is missing from source code?} A function call may not have its definition in generic source code, and, vice versa, a definition may not have a concrete call site. The function definition is inferred from its name, emulating how humans read the source code. 
    \item \texttt{SP-\ref{sp:have_fn_def}}. \textit{How does NI represent functions if their definitions are available in the source code? How should recursion be handled?} We compile it once into a vector signature to follow the linearity principle. For recursion, we use guessed representations and do not step into the recursive calls within the definition. Alternative designs may choose to inline expand the function definition \cite{serrano1997inline}, which may be a more detailed neural representation of function definitions.
    \item \texttt{SP-\ref{sp:ground_truth}}. \textit{Ground truth problem.} Abstract semantics may be defined flexibly, similar to how vector encodings are flexibly used as a general representation in other deep learning methods. We do not study pre-training of Neural Interpretation in this paper. If we do pre-train NI, we would need to find the ``best'' abstract semantics --``a gold standard''-- which has general information that can transfer to downstream tasks universally. However, as we argued in Appx. \ref{appx:ground}, language modeling objectives are not the gold standard of abstract semantics. We consider it an interesting research direction. 
    \item \subproblem{sp:pooling}\texttt{SP-\ref{sp:pooling}}. \textit{The Guesser's pooling methods.} Which set of tokens should be pooled to guess the representation of an expression? The goal of pooling is give as much information as possible to the Executor, while disentangling the expression from the rest of the context. Our pooling methods are given in Appx. \ref{appx:engineering}.
\end{itemize}

In the Limitations Section of Appx. \ref{appx:limitation}, we also present some additional research problems that are not addressed in this paper and may be tackled by future Neuro-Symbolic Execution models. 

\section{Language modeling is not enough to solve the ground truth problem}
\label{appx:ground}

Neural Interpretation may be trained with language modeling objective (e.g. MLM \cite{devlin2018bert}). However, the example in Fig.~\ref{fig:assign_twice} shows its limitation. Variable \texttt{b} is assigned twice at two locations $B_1, B_2$ with different abstract semantics $b_1, b_2$. If a variable name is the gold standard for the abstract semantics, then $b_1=b_2$. However, NI should distinguish $b_1, b_2$ such that $\texttt{less\_than}(b_1,b_2)$ is True and $\texttt{less\_than}(b_2,b_1)$ is False. After all, humans can deduce this fact with mental execution. 

\begin{figure}[t]
    \centering
\begin{lstlisting}[style=mypython]
# Python
def assign_twice(a):
    b = a+1
    b = b+1
    return b
\end{lstlisting} %
    \caption{As a variable can be assigned more than once, its name alone is not a sufficient ground truth for abstract semantics, and language modeling objectives are not enough.}
    \label{fig:assign_twice}
\end{figure}

\section{Architecture engineering details}
\label{appx:engineering}

The architecture of Neural interpretation follows compiler design (Fig. \ref{fig:archi}). NI has the following components: Parser, Code Generator, Neural Interpreter, Guesser, and Executor. Neural Interpretation is a Python interpreter architecture written in Python. Our source code is publicly available for exploration. All datasets are open-sourced with public usage licenses to reproduce the study.

\paragraph{Parser.} We use an existing open-source parser called Tree-sitter \cite{brunsfeld} that currently supports 46 languages. Given a source code, the parser returns its abstract syntax tree (AST). Tree-sitter has a cursor interface that returns an iterator to walk through all nodes on the AST. We invoke the code generator on every AST node iterated by the cursor.

\paragraph{Code Generator.} We designed the Code Generator for Neural Interpretation. The Code Generator traverses an abstract syntax tree and invokes the Neural Interpreter to execute each AST node (Python statement) one-by-one and perform its language construct (e.g. while-loops, list comprehensions, function calls). Code Generator is a standard component in compilers, often implemented with visitor pattern to achieve double dispatch, as seen in our Code Generator as well. Code Generation alone takes around 10 milliseconds per script on average on a single CPU core. The Code Generator can parse a majority of Python scripts in the wild (Tab. \ref{tab:dataset}). We provide a few examples to illustrate code generation implementation in Fig. \ref{fig:codegen_assign} to Fig. \ref{fig:codegen_import}.

\begin{figure}[h]
\begin{lstlisting}[style=mypython]
# Python
def visit_assignment(self, ast_node: AcceptAssignment):
    children = ast_node.children
    assert children[1].type == "="
    lhs = children[0]
    rhs = children[2]
    val_visitor = VisitorValue(self.inter)
    # val is the returned vector for the rhs expression
    val = make_accepter(rhs).accept(val_visitor)
    self.interpreter.store(lhs, val)
    return val
\end{lstlisting} %
\caption{Code generation for assignment statements.}
\label{fig:codegen_assign}
\end{figure}

\begin{figure}[h]
\begin{lstlisting}[style=mypython]
# Python
def visit_identifier(self, ast_node: AcceptIdentifier):
    var = self.interpreter.lookup(ast_node)
    # for variable misuse benchmark only
    if ast_node.source_contamination:
        var.contaminated = True
    return var
\end{lstlisting} %
\caption{Code generation for identifiers. Source contamination flag is set only for the variable misuse task.}
\label{fig:codegen_identifier}
\end{figure}

\begin{figure}[h]
\begin{lstlisting}[style=mypython]
# Python
def visit_list(self, ast_node: AcceptList):
    children = ast_node.children
    assert children[0].type == "["
    assert children[-1].type == "]"
    vals = []
    inter = self.interpreter
    for i in range(1, len(children) - 1):
        vis = VisitorValue(inter)
        # get the vector representation of every element
        val = make_accepter(children[i]).accept(vis)
        vals.append((val, children[i]))
    a_list = inter.get("__list_of__", language_construct=True)
    the_list = inter.lambdaa(ast_node, a_list, *vals)
    return the_list
\end{lstlisting} %
\caption{Code generation for lists, reflecting the named representation principle.}
\label{fig:codegen_list}
\end{figure}

\begin{figure}[h]
\begin{lstlisting}[style=mypython]
# Python
def visit_while_statement(self, ast_node: AcceptWhileStatement):
    children = ast_node.children
    val_visitor = VisitorValue(self.interpreter)
    # children[1] is the iterator.
    # vector for the condition of while loop
    condition = make_accepter(children[1]).accept(val_visitor)
    inter = self.interpreter
    fn = inter.lookup("__while__", language_construct=True)
    # the context vector for the while-condition
    context = inter.lambda_call(ast_node, fn, condition)
    # within this with-block, all \texttt{lambda} calls will receive a context vector representing the control flow
    with inter.control_flow_context(ast_node, context):
        # performs code generation on the while body
        make_accepter(cs[3]).accept(self)
\end{lstlisting} %
\caption{Code generation for while loops, reflecting the linearity principle.}
\label{fig:codegen_while}
\end{figure}

\begin{figure}[h]
\begin{lstlisting}[style=mypython]
# Python
def visit_import_from_statement(self, ast_node: AcceptImportFromStatement):
    pass
\end{lstlisting} %
\caption{Code generation for import statements, which has no effect. Not all language constructs in Python are implemented to simplify engineering.}
\label{fig:codegen_import}
\end{figure}

\newpage

\paragraph{Neural Interpreter.} The Neural Interpreter emulates the Python interpreter that executes program statements. It is a virtual machine that provides the runtime environment.  The Interpreter holds a stack of available scopes per Python scoping rules that the current execution line has access to. Every scope has a memory table, implemented as a hash table with string keys and vector values. When executing the source code, at any point in time, the memory tables contain all previously defined variables in the scope and their current values. When a variable is assigned a new vector representation, the memory table is updated accordingly. The Interpreter holds a stack of control flow contexts, representing the nested control flow conditions that the current execution line belongs to.

To reflect object-oriented nature of Python, Neural Interpreter define additional classes such as \texttt{Variable} and \texttt{Object}, such that every \texttt{Variable} can keep track of every \texttt{Object} that it was assigned to in the past by static-single assignments \texttt{SSA}. The \texttt{Scope} class implements memory tables and provide with-statement interface to the Code Generator for easy scope creation. Every \texttt{Object} keeps track of the vector value as well as the AST node that computes it, which is only used for diagnosis and code analysis (e.g. to label source contamination in variable misuse), not by the Executor. \texttt{ExecutableScript} is a subclass of \texttt{Object} that holds the compiled vector signature of a function definition. All of these classes are just wrappers for abstract semantics vectors: only the abstract semantics vectors are used by $\lambda$-Executor.

To analyze the execution of a script and compute various loss functions, the Interpreter has various ``hook'' functions that can be triggered when execution events happen. For example, when $\lambda$ is called, we register the $\lambda$ call and store its control flow contexts, function signature for currying, and arguments, so that we can run this $\lambda$ call again if needed or exchange the arguments for the argument discrimination task, for example.

\paragraph{Guesser.} 
The Guesser uses a BERT Transformer \cite{devlin2018bert}. The Guesser $\gamma$ is formally defined as  $\mathcal \gamma(S) = \mathbf G$ per Eq. \ref{eq:guesser}. To recap, given a source code string $S$ that contains expressions $E_1, E_2, ... E_N$ (identifiers included), the Guesser $\gamma$ outputs a matrix $\mathbf G: \mathbb R^{N\times H}$, where $H$ is  
For a tokenized source code sequence $[t_1, t_2,... t_L]$,
\begin{align}
    \texttt{BERT}_\gamma([t_1, t_2,... t_L]) &= [a_1, a_2, ... a_L] \label{eq:guessbert}
\end{align}
For an expression (identifier included) $E$ that corresponds to $K$ token indices $t_{E,1}, t_{E,2}... t_{E,K}$, its guessed representation is given by 
\begin{align}
    g_E = \textit{maxPool}(a_{E,1}, a_{E,2}... a_{E,K}), \label{eq:pool}
\end{align}
where \textit{maxPool} is takes the maximum over $K$ vectors for every vector index $1, 2...H$ to return another $H$-element vector $g_E$. The vector $g_E$ is a row in the Guesser $\gamma$'s output matrix $\mathbf G$. A type embedding is added to the pooled result $g_E$ based on the AST node type of expression $E$. The set of tokens corresponding to each expression is discussed below.

We use Huggingface libraries \cite{wolf2019huggingface}, which provide CodeBERT \cite{feng2020codebert} for fine-tuning. The Guesser Transformer is a standard BERT Transformer model \cite{devlin2018bert} that receives a token sequence and returns one vector for each token. Some built-in functions are looked up from an embedding matrix rather than guessed from the source code. The built-in functions are listed in Appx. \ref{appx:build_in}.

\paragraph{Guesser Pooling.}  For each token sequence, the Guesser Transformer outputs a list of vectors for each token \cite{devlin2018bert}. A max pool over the corresponding tokens for each expression (identifier included) produce a single vector $v$, which is then stored on the AST node. The set of corresponding tokens is chosen by the Code Generator and depends on the Python language construct, designed to select all relevant tokens to maximize performance while excluding irrelevant tokens. For example:
\begin{enumerate}[topsep=-5pt,itemsep=-1ex,partopsep=1ex,parsep=1.3ex, leftmargin=3ex]
    \item For constant values, such as float \texttt{3.1415926}, string \texttt{"lorem\_ipsum"}, the pooled tokens simply correspond to the source code substring.
    \item For an expression such as \texttt{1+1}, tokens in the expression \texttt{1+1} are pooled.
    \item For a variable or function without definitions, such as \texttt{duck}, tokens in the name are pooled.
    \item For an assignment statement such as \texttt{a = 1+1}, the object on the right-hand-side expression is directly stored in the Interpreter memory by the identifier. Therefore, the guessed representation of variable \texttt{a} is the guessed representation of \texttt{1+1}. The semantics of the name ``a'' is not lost, because when \texttt{a} is used in the next expression, it will be a part of the guessed semantics. For this reason, the semantics of both left-hand-side and right-hand-side are preserved, which maximally represents program semantics at the guessing stage.
    \item For a function whose definition is available, tokens in the definition body are pooled, since every token in the definition body can potentially determine the function's semantics. The function name refers to this guessed representation due to assignment behavior (Case 4). 
    \item For an iterator variable in a for-statement, such as \texttt{for i in range(10)}, the iterable \texttt{range(10)} stepped through is pooled and becomes the guessed representation of \texttt{i} due to assignment behavior (Case 4).
\end{enumerate}
Other pooling methods may be designed (\texttt{SP-\ref{sp:pooling}}).

Instead of pooling right after the Guesser Transformer's output for all possible token subsets, we perform pooling on-demand only when an Executor requires a variable's initial values, which avoids pooling unneeded expressions. If the pooled tokens cannot be found, we use a learnable vector as the default value (see Appx. \ref{appx:build_in}).

\paragraph{$\lambda$-Executor.} The Executor is formally defined as $ f(a) = \lambda(\theta_f, a)=r$ by Eq. \ref{eq:return}. 

We use Huggingface libraries \cite{wolf2019huggingface}. The Executor can fine-tune from CodeBERT \cite{feng2020codebert} or CodeGPT \cite{lu2021codexglue}. 
If fine-tuning from CodeBERT, the Executor Transformer has BERT architecture with vector input sequence $[\theta_f', c_1, c_2, ... c_M, a_0, a_1, ... a_N]$. 
\begin{align}
    \texttt{BERT}_\lambda([\theta_f', c_1, ... c_M, a_0, ... a_N]) &= [r, c'_1, ... c'_M, a'_0,... a'_N ] \label{eq:execbert}\\
    \lambda([\theta_f', c_1, ... c_M, a_0, ... a_N]) &= r  \label{eq:execret}
\end{align}

If fine-tuning from CodeGPT, the Executor Transformer has GPT architecture with vector input sequence $[\theta_f', c_1, c_2, ... c_M, a_0, a_1, ... a_N, \texttt{EOS}]$, where \texttt{EOS} is a token that marks the end of the sequence. 
\begin{align}
    \texttt{GPT}_\lambda([\theta_f', c_1, ... c_M, a_0, ... a_N, \texttt{EOS}]) &= [r, c'_1, ... c'_M, a'_0,... a'_N, o] \label{eq:execbert}\\
    \lambda([\theta_f', c_1, ... c_M, a_0, ... a_N, \texttt{EOS}]) &= o  \label{eq:execret}
\end{align}

The vector $\theta_f'$ is a single vector mapped from multi-vector signature parameter $\theta_f$ of the called function $F$ for currying. The list of vectors $c_1, c_2, ... c_M$ are the control flow context vectors. The list of vectors $a_0, a_1, ... a_N$ are the arguments. Additionally, three type embeddings are added to control flow context vectors, function signature, and arguments respectively to differentiate them. Both the guessed and executed vector representations for each argument are concatenated and given to the Executor as inputs, in order to improve execution modeling with more information. The return value $r$ is the first vector of the BERT output aligned with the $\theta_f'$ input vector, similar to the standard classification practice with the CLS token \cite{devlin2018bert}. The return value $o$ is the output for GPT, aligned with the input EOS token. When multiple return values are needed, we may unpack the single return value $r$ to $k$ return values $r_1, r_2, ... r_k$ by running $\lambda$-Executor with the built-in \texttt{\_\_unpack\_k\_\_} function (Appx. \ref{appx:build_in}).

\paragraph{Neural Compilation.} Function signature parameter $\theta_f$ is obtained through guessing or neural compilation if the function definition is available. To simplify the implementation, instead of using a tuple signature $\theta_f=(p,r)$, we map the signature to a single vector $\theta_f'$ during neural compilation, in order for all Neural Interpretation objects to be homogeneously represented by a single vector with a constant dimension $H$.

Function signature parameter can take into accounts where the function $F$ may update its parameters during execution. We use this more general form in our implementation. The generalization does not impose additional assumptions on our theory: updated parameters are just additional return values. To neurally compile the definition into $\theta_f'$, we use vector tuple $(g_f, p_1, p_2, ...p_U, r, p_1', p_2', ... p_U')$ as the signature instead of $(p, r)$. Vector $g_f$ is the guessed representation of the function to improve neural compilation modeling quality. The list of vectors $p_1, p_2, ... p_U$ are the parameters before the function $F$ is called. The list of vectors $p_1', p_2', ... p_U'$ are the parameters after the function $F$ is called. Vector $r$ is the return value. The neural network that maps the signature into a single vector is simply the $\lambda$-Executor currying a built-in function \texttt{\_\_compile\_function\_\_} (see Appx. \ref{appx:build_in} for all built-in functions of Neural Interpretation).

Fig. \ref{fig:codegen_compilation} presents the Code Generator logic for neural compilation.

\begin{figure}[h]
\begin{lstlisting}[style=mypython]
# Python
def visit_function_definition(self, ast_node: AcceptFunctionDefinition):
    children = ast_node.children
    inter = self.interpreter
    with inter.new_scope("func: " + fun_name):
        parameter_visitor = VisitorParameter(inter)
        parameters = make_accepter(children[2]).accept(parameter_visitor)
        # the parameters need to exist in the scope
        before_parameters = [p for p in parameters]

        block_visitor = VisitorBlock(inter)
        make_accepter(children[4]).accept(block_visitor)

        try:
            ret = inter["__return_val__"]
        except KeyError:
            ret = inter.executor.none_embedding

        # get side effects on all parameters
        after_parameters = parameters

    guessed_fun_repr = inter.guess_value(ast_node.node)
    compile_function = inter.get("__compile_function__", language_construct=True)
    signature = (guessed_fun_repr, *before_parameters, ret, *after_parameters)
    compiled_signature = inter.lambdaa(ast_node.node, compile_function, *signature)
    new_function_obj = inter.new_function(ast_node.node, compiled_signature)
    # assign the function to its name
    inter.assign(acc.node.children[1], new_function_obj)
\end{lstlisting} %
\caption{Code generation for neural compilation of function definitions.} %
\label{fig:codegen_compilation}
\end{figure}

\paragraph{Pooled multi-threading} Neural networks are typically trained with batched inputs to leverage parallel computation on GPUs. The Guesser can be executed in batch by collating input sequences and splitting them after. To achieve batched Executor, every script has an Executor worker thread that runs the Code Generator and the Neural Interpreter. The Executor worker has the same interface as a normal Executor, but instead of executing the \texttt{lambda} call right away, it synchronizes for all live Executor workers in the batch. When every live worker in the batch has received their \texttt{lambda} call, all the worker calls are collated into a batched tensor input and executed on the Executor Transformer. The batched \texttt{lambda} call result is then split and sent back to each Executor worker. When the Executor worker receives its \texttt{lambda} call result, it resumes Code Generator and Neural Interpreter.

The Executor workers are pooled with a factor of 2 for an even allocation of workload per thread and speed up execution. For example, if a batch input has 16 scripts, we create 8 Executor workers. During batched execution, whichever worker finishes first will receive another script to be executed. A long script, therefore, may occupy only one worker, while another worker execute multiple shorter scripts during the same time.

\section{Additional language support}
\label{appx:language}
Adding new languages other than Python to Neural Interpretation may require additional implementation to extend the Code Generator for the new language constructs. For each language, a new Code Generator needs to be written to specifically handle every language construct by its grammar specification.

The Tree-sitter parser \cite{brunsfeld} supports 46 languages, and open sourced parser likely exist for additional Neural Interpretation languages.

The Neural Interpreter and its Neural Instruction Set are language-agnostic and may be re-used directly for different languages, which is similar to how the same CPU on normal computers can execute programs in different languages. 

The Guesser may need to be pre-trained on the new language. 
The Executor models semantics and not syntax, so its architecture and even parameters may be reused directly for different languages.

\section{Neural Interpretation limitations}
\label{appx:limitation}

Some sub-problems of the Neuro-Symbolic Execution Problems are encountered but not addressed. Below is a list of limitations of Neural Interpretation as the first Neuro-Symbolic Execution model that point to future work directions.
\begin{itemize}[topsep=-5pt,itemsep=-1ex,partopsep=1ex,parsep=1ex, leftmargin=2ex]
    \item \subproblem{sp:alias}\texttt{SP-\ref{sp:alias}}. \textit{Pointer aliasing problem.} How should an Neuro-Symbolic Execution model handle pointer aliasing? Aliasing is needed to model object-oriented programming features in detail, and it may require a more advanced external memory than our string-vector hash map.
     \item \subproblem{sp:class}\texttt{SP-\ref{sp:class}}. \textit{Aliasing for detailed object representation.} How should classes, instances, attributes be represented by an Neuro-Symbolic Execution model? Recall that in Neural Interpretation, every object is represented by a vector. Therefore, attributes are evaluated on demand when accessed, from a built-in $\texttt{\_\_get\_attr\_\_}$ signature (see Appendix \ref{appx:build_in}) and the attribute name. Class definitions are Python programs, and instantiation uses a vector signature instead of Python's init method. Neural Interpretation handles OOP in a way that is consistent with the string-vector external memory representation and is robust against missingness of source code. Even when class definition of an object is missing, the meaning of an object attribute can still be obtained through guessing, and class instantiation (e.g. \texttt{Apple()}) can be performed given a single vector guessed from the class name (\texttt{Apple}). However, representing detailed attributes of an object can be preferable if the class definition is available, which would require additional methods not introduced in this paper, possibly an improvement of the string-vector external memory.
    \item \subproblem{sp:side_effect}\texttt{SP-\ref{sp:side_effect}}. \textit{Aliasing for side effects.} Can side effects of function calls be modeled by an Neuro-Symbolic Execution model? In Neural Interpretation, every function definition is compiled into a vector signature. When a compiled function is called, the vector signature does not have information regarding how some arguments may be updated as a side effect. Updating all arguments is an obvious alternative to model side effects, but it may not be effective. Side effects are often not encapsulated in a statement: due to aliasing, the execution of a function may potentially update all objects in the memory and not just its arguments. If every function updates every object in the memory, then every function call is just a global update, similar to that of a global attention mechanism.  What arguments are used in a function call is important and expresses encapsulation information, and treating every object in the scope as an argument of every function call indiscriminately would forgo the program structure that forms the inductive bias of Neural Interpretation.
    \item \subproblem{sp:curryhoward}\texttt{SP-\ref{sp:curryhoward}}. \textit{Connection with Curry-Howard correspondence.} Is it possible to formally connect abstract semantics of NSXP to classical theoretical computer science results of Curry-Howard correspondence \cite{howard1980formulae}?
    \item \subproblem{sp:engineering}\texttt{SP-\ref{sp:engineering}} \textit{Complete Python language support.} Code Generator supports most but not all Python constructs to simplify engineering. Since new language features are introduced frequently in new Python releases, constant development and maintenance of code generator is needed to achieve full language support.
\end{itemize}

\paragraph{Pointer aliasing.} 
Limitations \texttt{SP-\ref{sp:alias}}, \texttt{SP-\ref{sp:class}}, \texttt{SP-\ref{sp:side_effect}} of Neural Interpretation are due to the pointer aliasing problem.  Neural Interpretation uses static source code as input. It is impossible to compute statically precise alias information, including may-alias and must-alias, in generic source code that we are interested in \cite{ramalingam1994undecidability}. Many issues related to pointer aliasing remain open research problems \cite{hind2001pointer, vedurada2019batch}. It is even more difficult for dynamic languages like Python \cite{gorbovitski2010alias} and impossible for partial source code that NI learns from. Neural Interpretation does not handle aliasing in order to simplify modeling assumptions and achieve generality. Instead, Neural Interpretation achieves Neuro-Symbolic Execution with the design decision such as vector-based object representation and procedural programming style source code execution modeling with no side effects, as we discussed previously, which avoids the need for precise pointer alias information that cannot be obtained.  %

\section{Dataset and truncation details}
\label{appx:dataset}

We use two datasets, the ETH Py150 Open dataset and the variable misuse localization and repair dataset, both introduced from CuBERT \cite{kanade2020learning}. During pre-processing, we filter scripts that cause an error in Code Generator or are longer than 10000 characters that are the long tail of the dataset (see figures below). The number of original data points and filtered data points are presented in Tab. \ref{tab:dataset}. 

\begin{table}[t]
\caption{Filtering size for the ETH Py150 Open dataset and white-box variable misuse localization and repair datasets \cite{kanade2020learning}, with various filtering conditions including the length of the code, code generation errors, variable misuse labeling errors.}
\label{tab:dataset}
\centering
\begin{tabular}{rrrrrrr}
\toprule
                      & \multicolumn{3}{c}{Py150}       & \multicolumn{3}{c}{Variable Misuse \& Repair} \\ 
                      & Training & Validation & Test    & Training      & Validation      & Test        \\
                      \midrule
Original              & 74749    & 8302       & 41457   & 700708        & 75478           & 378440      \\
Too Long              & 9971     & 1092       & 5531    & 0             & 0               & 0           \\
Code Generation Error & 1346     & 114        & 588     & 5044          & 456             & 2729        \\
Misuse Labeling Error & n/a      & n/a        & n/a     & 3680          & 352             & 1890        \\
Filtered              & 63432    & 7096       & 35338   & 691984        & 74670           & 373821      \\
Filtered Percentage   & 84.86\%  & 85.47\%    & 85.24\% & 98.75\%       & 98.93\%         & 98.78\%    \\
\bottomrule
\end{tabular}
\end{table}

We present the histogram (100 bins) for the number of characters and the number of $\texttt{lambda}$ calls per script in ETH Py150 Open dataset and variable misuse localization and repair datasets \cite{kanade2020learning} from Fig. \ref{fig:py150len} to Fig. \ref{fig:misusecall} (training, validation, test set combined).

\begin{figure}
    \centering
     \begin{subfigure}[b]{0.48\linewidth}
         \centering
         \includegraphics[width=\linewidth]{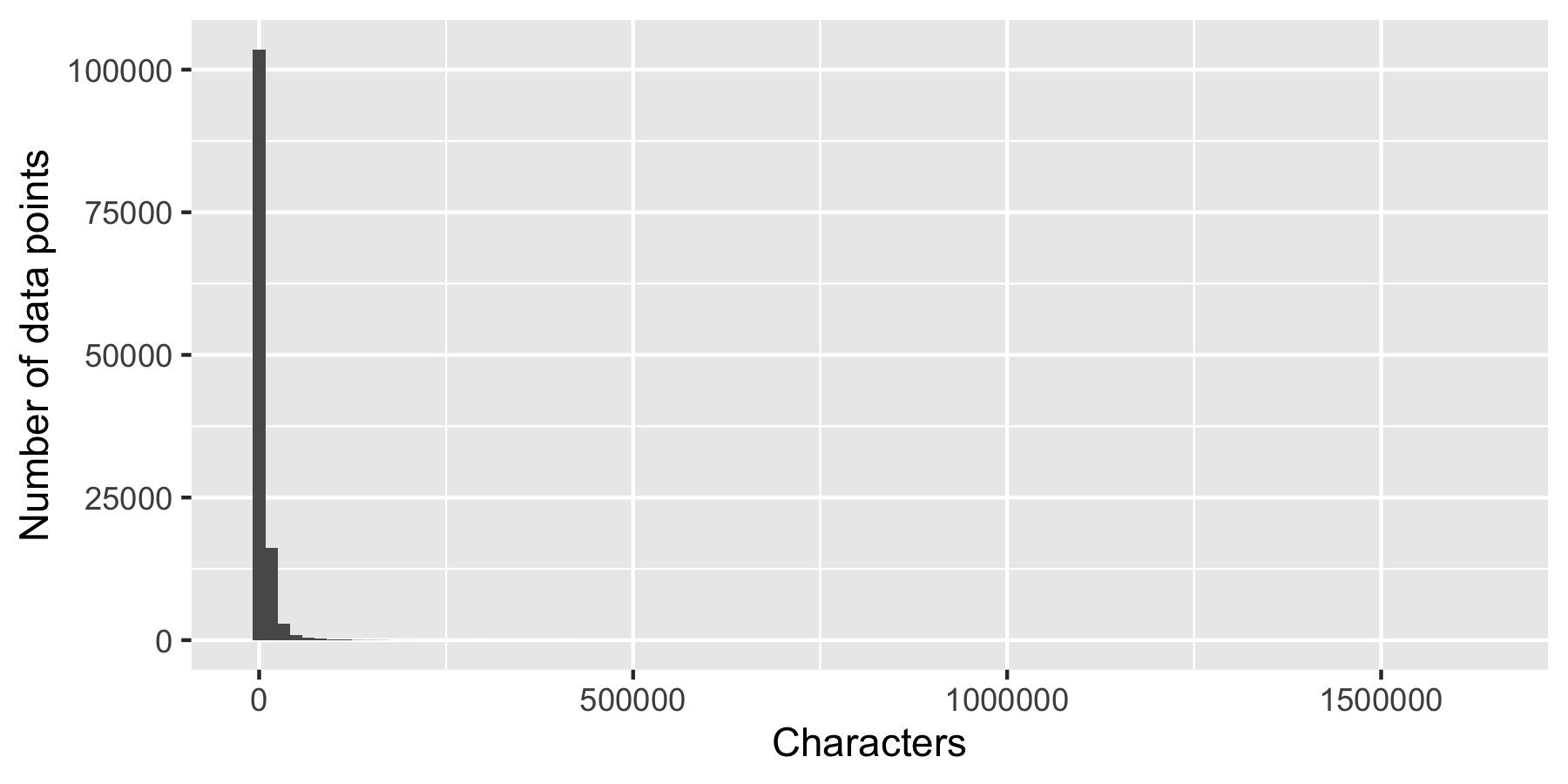}
         \caption{The entire ETH Py150 Open dataset. }
     \end{subfigure}\hfill
      \begin{subfigure}[b]{0.48\linewidth}
         \centering
         \includegraphics[width=\linewidth]{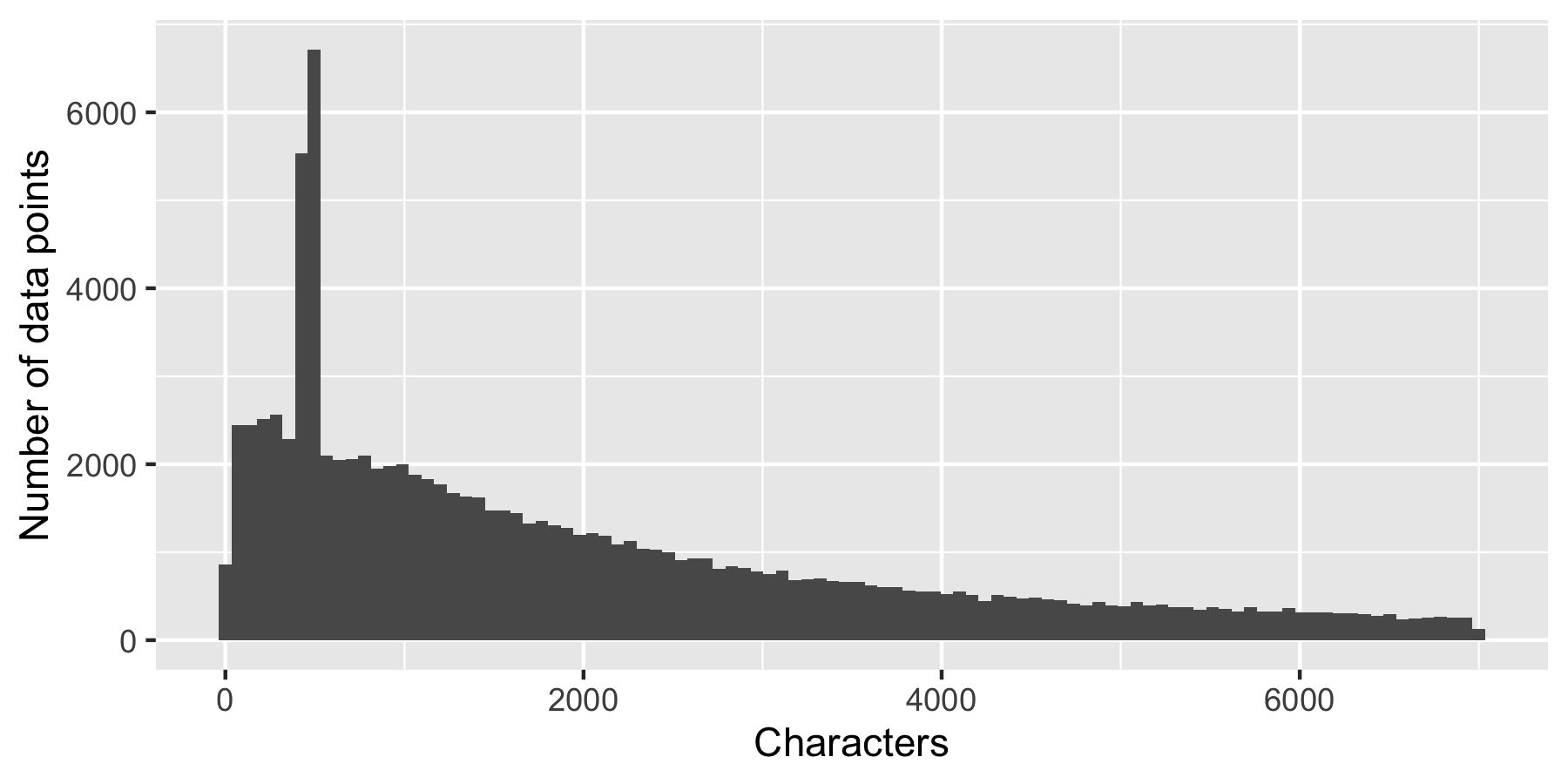}
         \caption{Python scripts in the ETH Py150 Open dataset with fewer than 7000 characters.}
     \end{subfigure}
    \caption{The number of characters in the ETH Py150 Open dataset \cite{kanade2020learning}.}
    \label{fig:py150len}
\end{figure}

\begin{figure}
    \centering
     \begin{subfigure}[b]{0.48\linewidth}
         \centering
         \includegraphics[width=\linewidth]{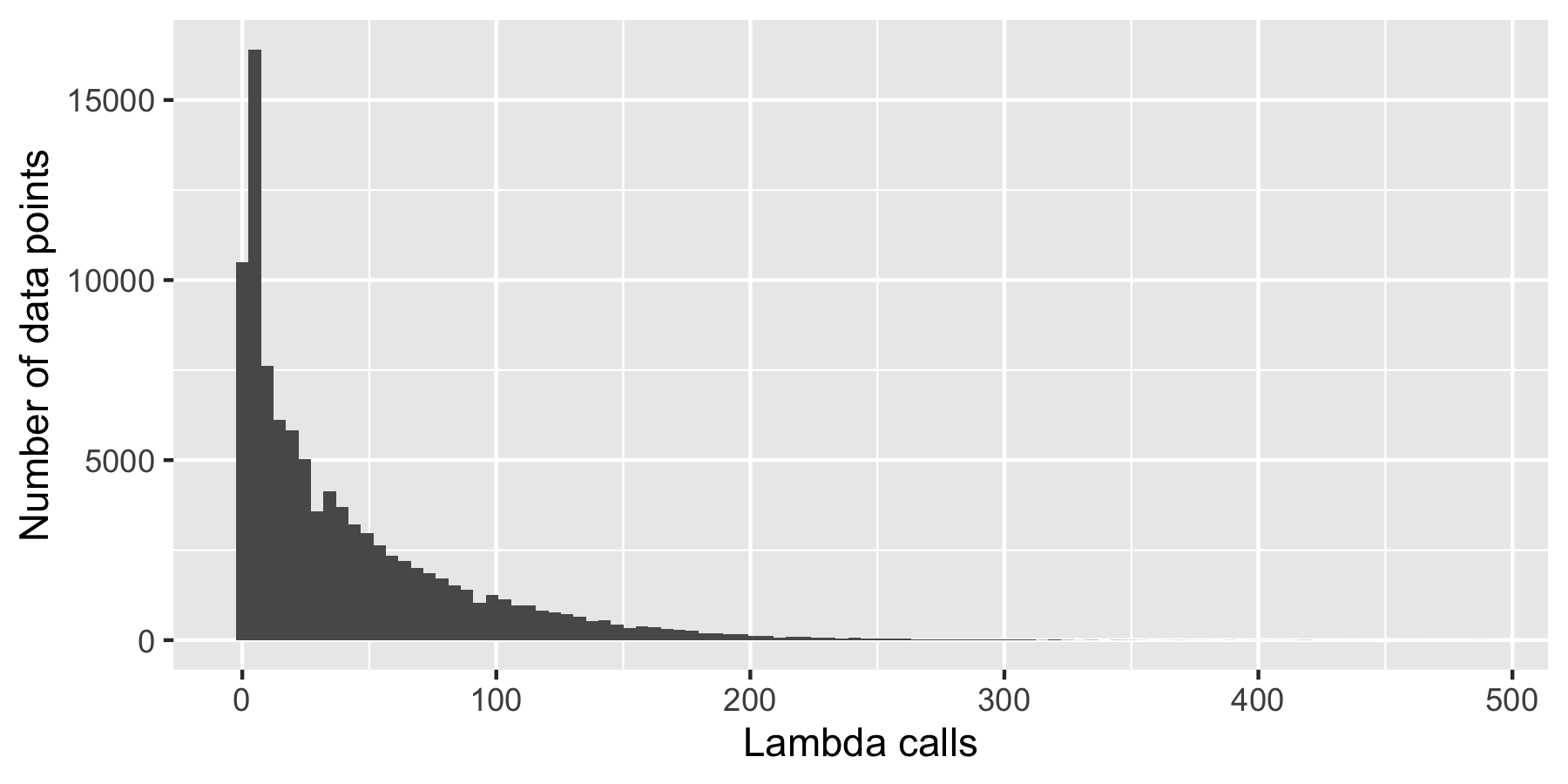}
         \caption{The entire ETH Py150 Open dataset. }
     \end{subfigure}\hfill
      \begin{subfigure}[b]{0.48\linewidth}
         \centering
         \includegraphics[width=\linewidth]{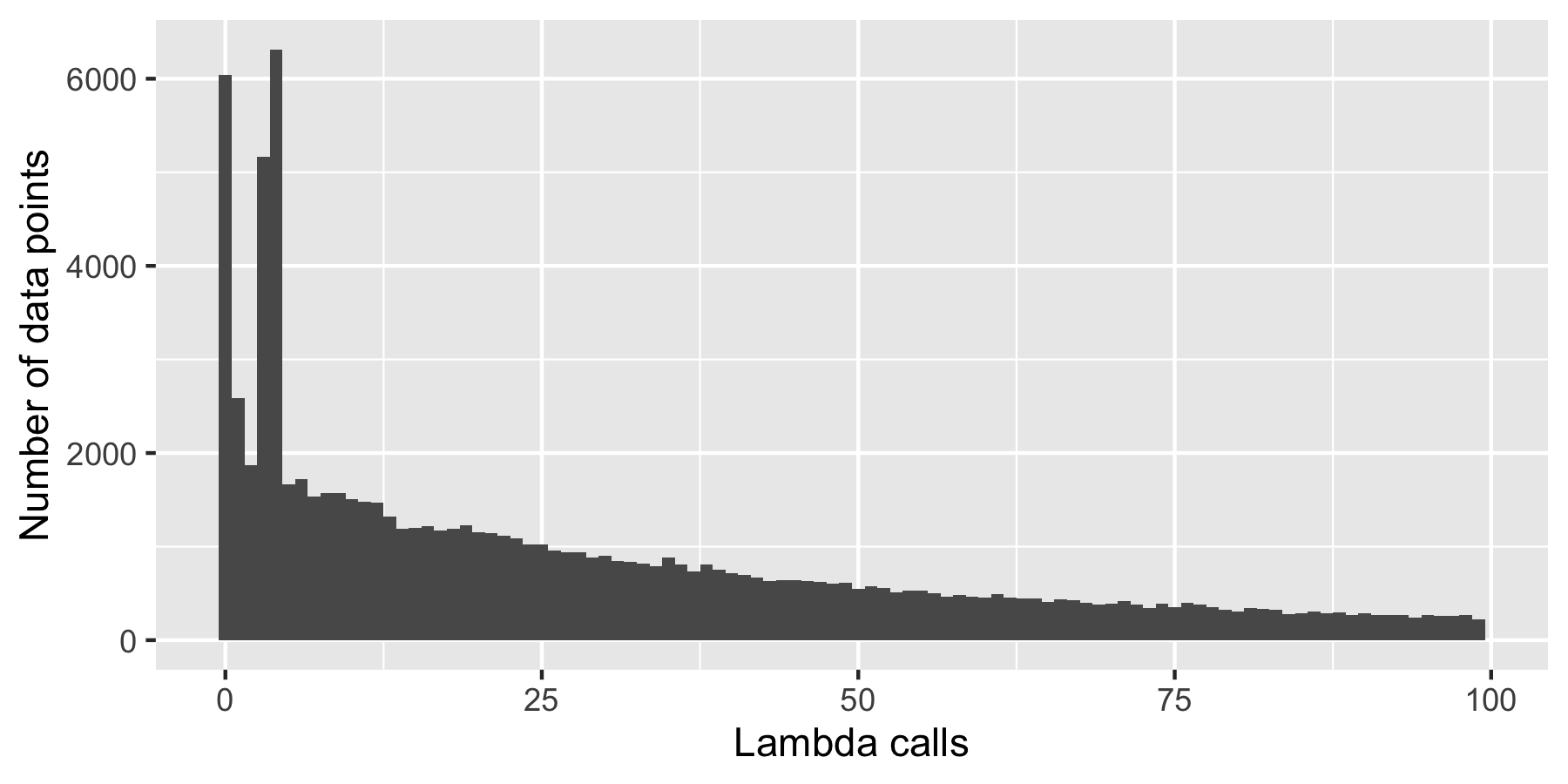}
         \caption{Python scripts in the ETH Py150 Open dataset with fewer than 100 \texttt{lambda} calls. }
     \end{subfigure}
    \caption{The number of \texttt{lambda} calls in the filtered ETH Py150 Open dataset \cite{kanade2020learning}.}
    \label{fig:py150call}
\end{figure}

\begin{figure}
    \centering
     \begin{subfigure}[b]{0.48\linewidth}
         \centering
         \includegraphics[width=\linewidth]{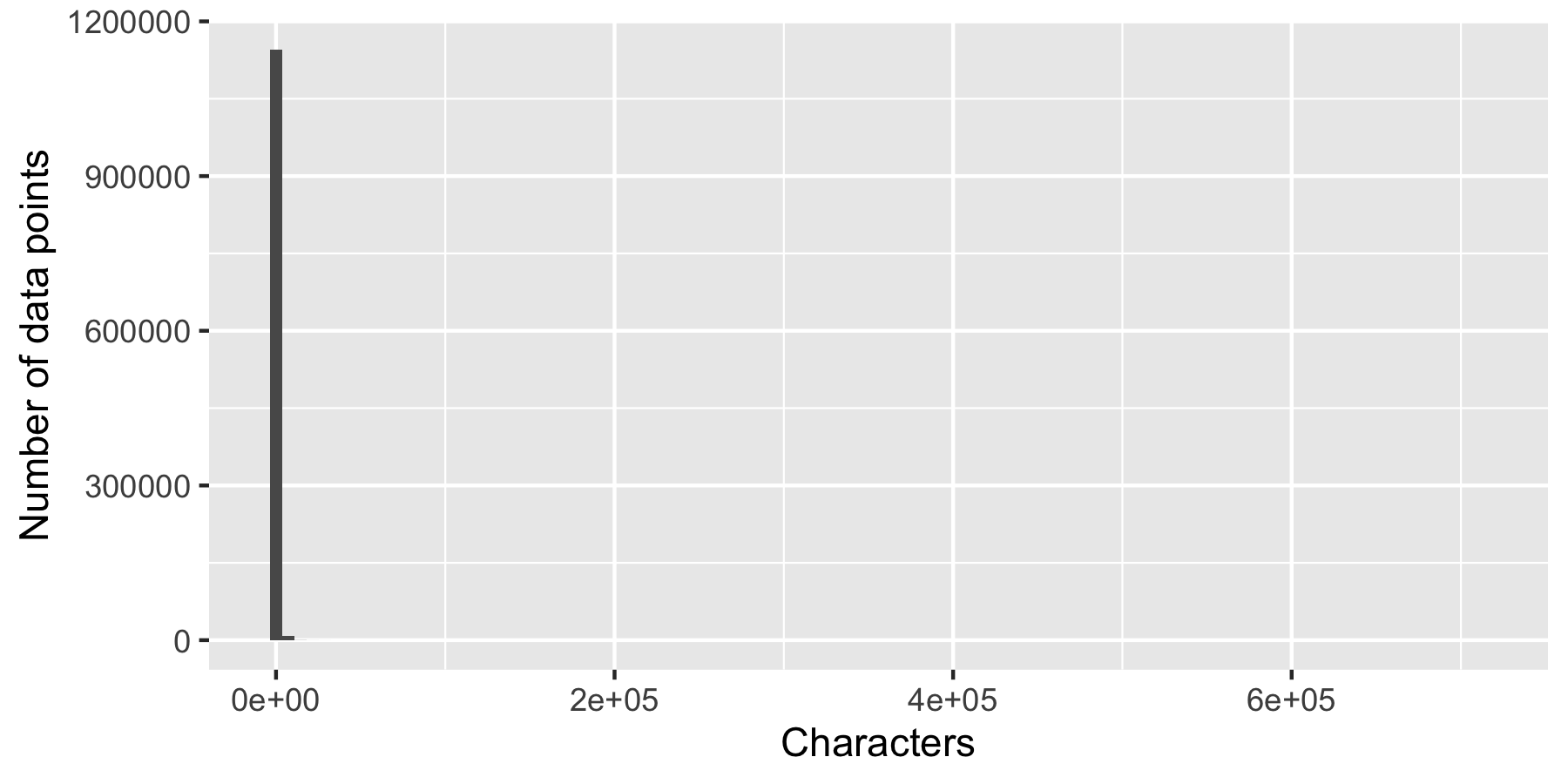}
         \caption{The entire variable misuse localization and repair dataset. }
     \end{subfigure}\hfill
      \begin{subfigure}[b]{0.48\linewidth}
         \centering
         \includegraphics[width=\linewidth]{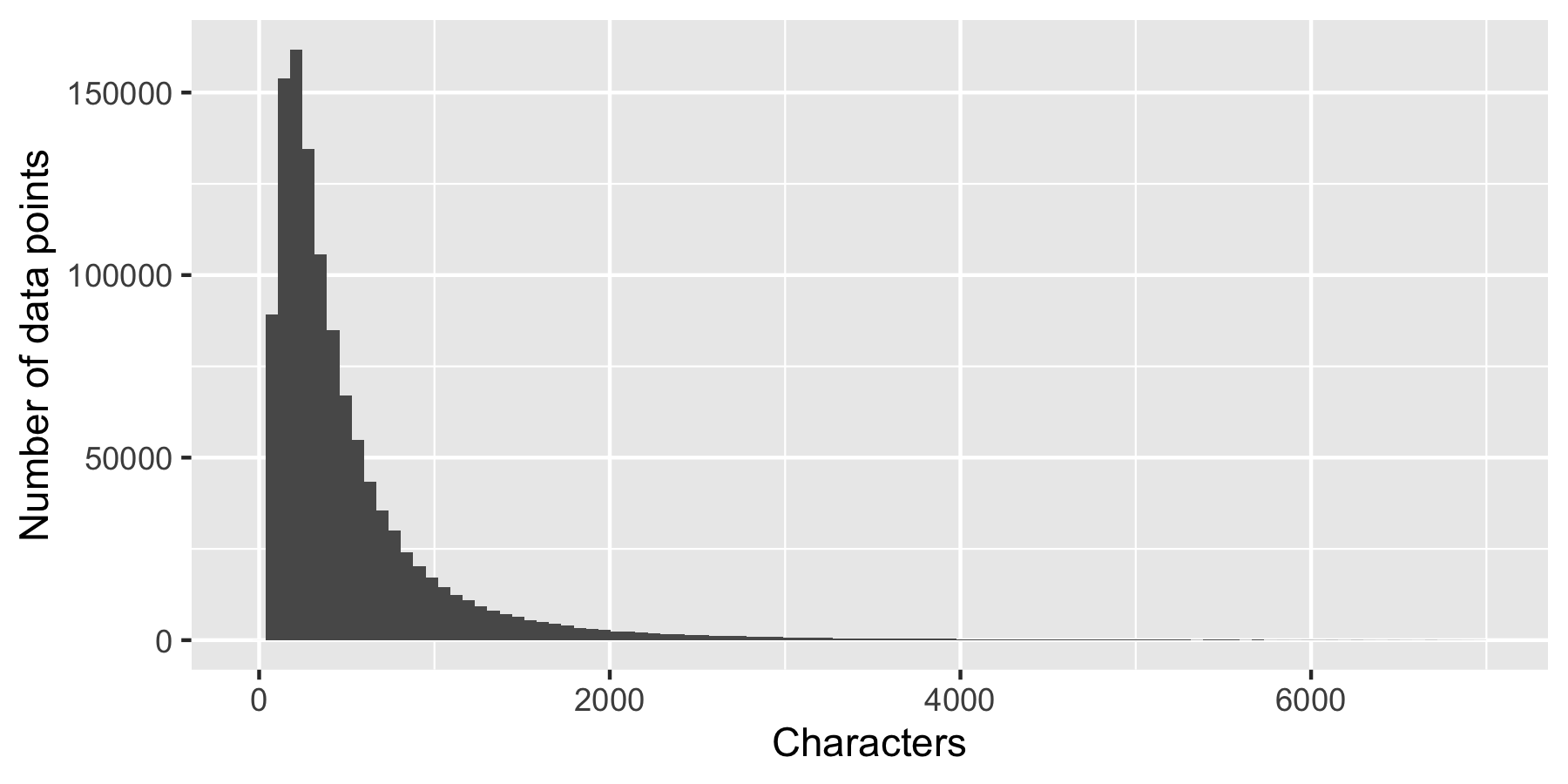}
         \caption{Python scripts in the variable misuse localization and repair dataset with fewer than 7000 characters.}
     \end{subfigure}
    \caption{The number of characters in the variable misuse localization and repair dataset \cite{kanade2020learning}.}
    \label{fig:misuselen}
\end{figure}

\begin{figure}
    \centering
     \begin{subfigure}[b]{0.48\linewidth}
         \centering
         \includegraphics[width=\linewidth]{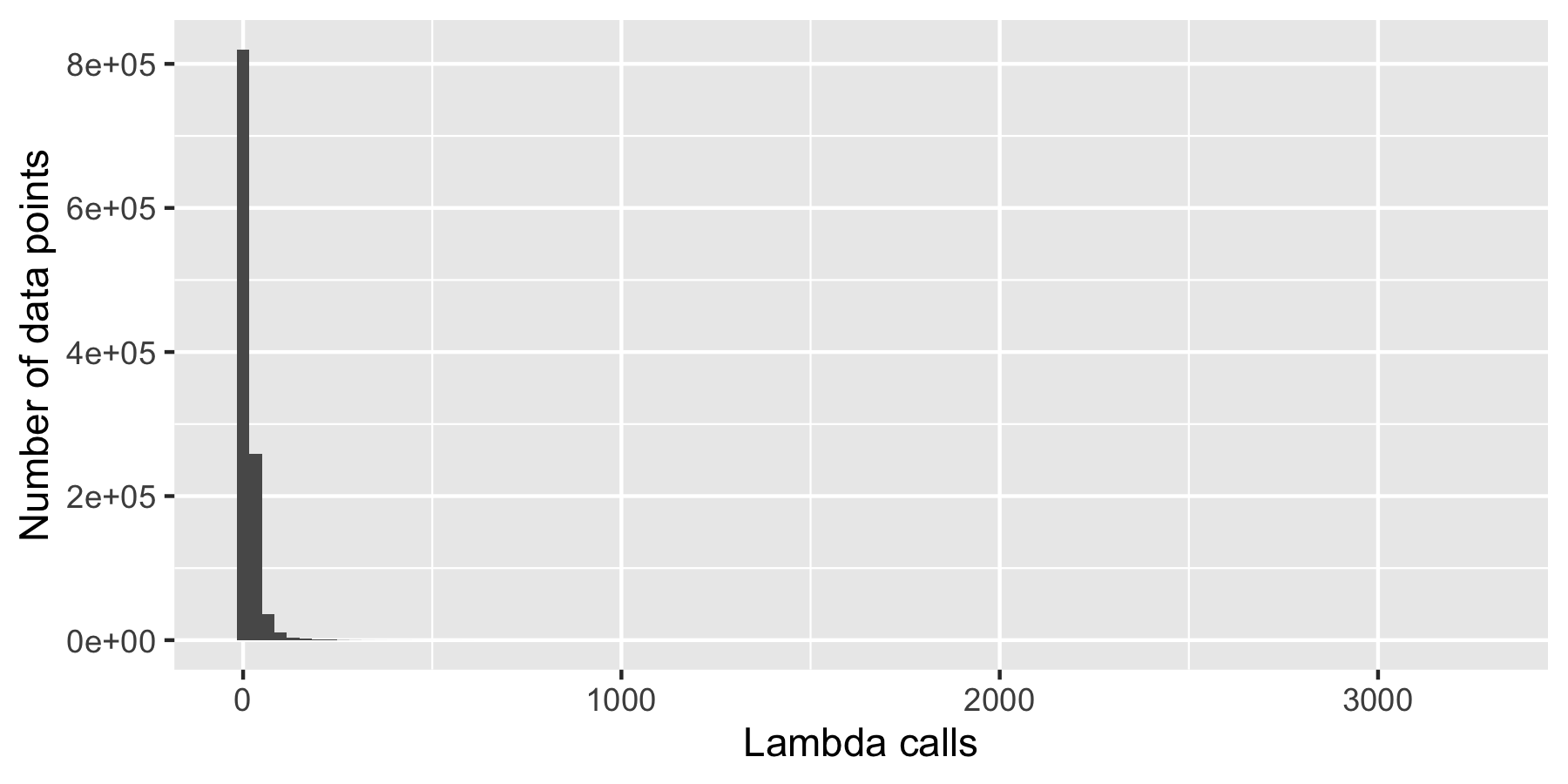}
         \caption{The entire variable misuse localization and repair dataset. }
     \end{subfigure}\hfill
      \begin{subfigure}[b]{0.48\linewidth}
         \centering
         \includegraphics[width=\linewidth]{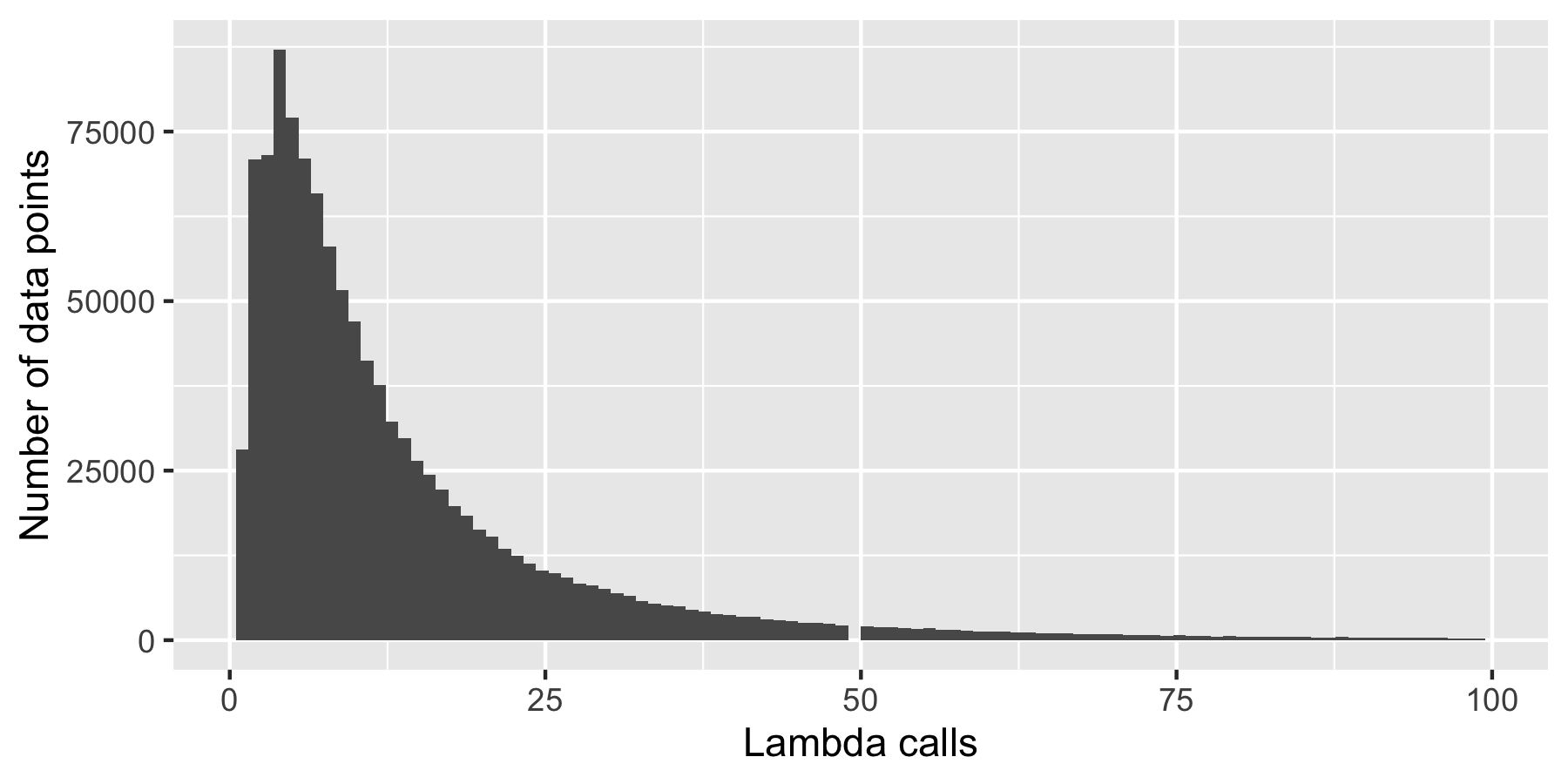}
         \caption{Python scripts in the variable misuse localization and repair dataset with fewer than 100 \texttt{lambda} calls. }
     \end{subfigure}
    \caption{The number of \texttt{lambda} calls in the filtered variable misuse localization and repair dataset \cite{kanade2020learning}.}
    \label{fig:misusecall}
\end{figure}

The filtered data set is given to the Neural Interpretation. The Guesser Transformer's input sequence is truncated to have at most 512 tokens, following the standard practice \cite{feng2020codebert}. The $\lambda$-Executor's function call is truncated to have at most 16 arguments per function call, with unlimited control flow context vectors. For experiments in Section \ref{sec:learning} ``Learning to Model the Execution of Generic Python Code'', every batch of 16 inputs have at most 128 \texttt{lambda} calls before batched execution terminates, as a form of truncation to maintain consistent peak GPU memory usage and prevent out-of-memory errors during training. For experiments in Section \ref{sec:misuse} ``White-box variable misuse localization and repair'', a batch of 64 inputs have at most 1024 \texttt{lambda} calls before batched execution terminates, as the scripts in the misuse dataset are shorter on average.

For experiments in Section \ref{sec:misuse} ``White-box variable misuse localization and repair'', the task is defined as four localization and repair steps. For this problem, the Code Generator needs to correctly label the argument in a function call to be misused and propagate the contamination. However, not all variable misuse occurs as a function call's argument variable. After executing the code, the misuse may not be labeled for any of the four steps, for which the loss could not be calculated. 
We find that around 0.5\% of the data points have such labeling errors (see Tab \ref{tab:dataset}), and their results are not included in Tab. \ref{tab:misuse} and Tab. \ref{tab:baseline}.

\section{Detailed definitions of loss objectives}
\label{appx:loss}

\subsection{Learning to Model the Execution of Generic Python Code}
\textit{Return variable classification loss $L_1$.} NI is trained to contrastively classify the return variable of an executed statement. For example, \texttt{celsius\_to\_fahrenheit(25)} is likely to be assigned to variables called \texttt{fahrenheit} or \texttt{converted}, but not \texttt{duck} or \texttt{table}. Given the executed result $r$, NI learns to select the correct variable $l$ to be assigned based on its guessed representation, among $K-1$ randomly sampled variables, through cross entropy.

For any assignment in the form of \texttt{lhs = [expression]}, neurally execute the right-hand-side expression (\texttt{[expression]}) to obtain the returned vector $r$. 
Let $\mathbf V : \mathbb R^{K\times H}$ be a matrix  with the guessed representation $l$ of the variable on the left-hand-side (\texttt{lhs}) and guessed representations of $K-1$ randomly sampled variables in the input batch of source code. Let $\alpha: \mathbb R^{2H}\to \mathbb R$ be a neural network decoder. $L_1$ is a cross entropy loss with softmax function:
\begin{align}
    L_1  = -\log \frac{\exp(\alpha(r, l))}{\sum_{i=1}^K \exp(\alpha(r, \mathbf V_i)}.
\end{align}

\textit{Argument discrimination loss $L_2$.} NI is trained to recognize a function call with randomly replaced arguments. For example, the returned abstract semantics of \texttt{celsius\_to\_fahrenheit(duck)} is unclear. NI learns to distinguish a real function call and a function call with misused arguments through binary cross entropy.

For any function call in the form of \texttt{fun(args)}, neurally execute it to obtain the return vector $r$. Randomly sample a function call \texttt{fun2(args2)} from the batch input of source code. Neurally Execute \texttt{fun(args2)} to obtain $r'$. Let $\beta: \mathbb R^H \to \mathbb R$ be a neural network decoder, and $\sigma$ be the sigmoid function. $L_2$ is a binary cross entropy loss:
\begin{align}
    L_2 = -\log \sigma(\beta(r)) - \log (1-\sigma(\beta(r'))).
\end{align}

\textit{Data-flow discrimination loss $L_3$.} NI is trained to recognize the data-flow path from one variable to another. For example, \texttt{fahrenheit} can likely be computed from data stored in variables \texttt{celsius}, \texttt{temperature}, or \texttt{thermometer}, but not \texttt{passport} or \texttt{duck}. NI learns to distinguish two variables with a data-flow path and two variables without a data-flow path through binary cross entropy.

Let $r$ be the return vector of a function call \texttt{fun(args)}. Let $a$ be the vector representing one of the arguments \texttt{args}. We define $(a,r)$ as a data-flow edge \cite{liu1998simple,ferrante1987program} and build a data-flow graph (DFG). Randomly sample a data dependency path from $b$ to $c$. Randomly sample two nodes $(b', c')$ in the same DFG that has no data dependency path from $b'$ to $c'$. Let $\phi: \mathbb R^{2H}\to \mathbb R$ be a neural network decoder. $L_3$ is a binary cross entropy loss:
\begin{align}
    L_3 = -\log \sigma(\phi(b, c)) - \log (1-\sigma(\phi(b', c'))).
\end{align}

\subsection{White-Box Variable Misuse Localization and Repair}
The white-box variable misuse localization and repair has four steps. During training, all five losses are added together with $L=\sum_{k=1}^5 L_k$.
\paragraph{Code classification.}
To train NI to predict whether the input code has a misuse, we collect the return values $r_1, r_2, ... r_L$ of all $L$ function calls. We train a one-layer BERT-style Transformer encoder $\kappa$ and compute $\kappa([r_1, r_2, ... r_L]) = o$, where $o$ is the un-normalized binary classification logit $o\in \mathbb R$. Given misuse label $y\in\{0,1\}$, binary cross entropy $L_1$ is computed:
\begin{align}
    L_1 = -y\log\sigma(o) - (1-y) \log (1-\sigma(o))
\end{align}

\paragraph{Call localization.}
To train NI to localize the first $\lambda$-Executor call with a misused variable $C$, we label $C$ to be contaminated. When a contaminated argument is used in a call executed by $\lambda$-Executor, the return value is also contaminated recursively. We collect the return values $\mathbf r = r_1, r_2, ... r_L$ of all $L$ function calls, and we have a binary contamination label $c_l$ for every $r_l$.
We train a one-layer BERT-style Transformer encoder $\eta$ and compute $\eta([r_1, r_2, ... r_L]) = [o_1, o_2, ... o_L]$, where $o_l\in \mathbb R$ for $l=1,2... L$.
A binary cross entropy loss $L_2$ is computed as follows. 
\begin{align}
    L_2 = \sum_{l=1}^L -c_l\log\sigma(o_l) - (1-c_l) \log (1-\sigma(o_l))
\end{align}

It's always true in our dataset that one of the return values $r_C \in \mathbf r$ is computed by using the misuse variable $C$ as an argument. We train a one-layer BERT-style Transformer encoder $\psi$ and compute $\psi([r_1, r_2, ... r_L]) = [s_1, s_2, ... s_L]$, and uses the output to classify the source contamination call that returns $r_C$. A cross entropy loss $L_3$ is computed as follows:
\begin{align}
    L_3 = -\log \frac{\exp(s_C)}{\sum_{i=1}^L \exp(s_i)}
\end{align}

For inference, we do not need to predict for each lambda call whether its returned value is contaminated. Instead, we can find the source contamination call with $\psi$ directly. Therefore, the task associated with $L_2$ is for training only.

\paragraph{Argument localization.} To train NI to localize the misused argument in the localized call, we train a one-layer network that takes the Transformer output vector $a'$ for each argument $a$ to classify the misused argument $a_C$. Let $a_1, a_2 ... a_N$ be the arguments to the $\lambda$ Transformer in Eq. \ref{eq:execbert}, the aligned outputs are $a'_1, a'_2, ... a_N'$. Let $\tau: \mathbb R^H\to \mathbb R$ be a one-layer network. The cross entropy loss $L_4$ for argument localization is
\begin{align}
    L_4  = -\log \frac{\exp(\tau(a'_C))}{\sum_{i=1}^N \exp(\tau(a'_i))}.
\end{align}

Note that in all other tasks, $a'$ is discarded, as we only keep the return vector $r$ (Eq. \ref{eq:return} and Eq. \ref{eq:execret}). For inference, we simply predict the argument with the maximum logit score.

\paragraph{Repair.} To train NI to repair the misuse, like a debugger, we pause when the misused call is executed to collect all vector objects $\mathbf o$ in the Interpreter memory. We substitute the misused argument with each object $o$ and obtain the return values. 

Let the misused return value with $a_C$ be 
\begin{align}
    r = \lambda([\theta_f', c_1, ... c_M, a_0, ... a_C, ... a_N])
\end{align} by Eq. \ref{eq:return} and Eq. \ref{eq:execret}. Replace the misused argument $a_C$ with an object $o\in \mathbf o$ and execute the call with the same currying signature parameter, contexts, and other arguments, we have
\begin{align}
    r_o= \lambda([\theta_f', c_1, ... c_M, a_0, ... o, ... a_N]).
\end{align} 
Let $r_\mathbf o$ be the return values of the contaminated call with replaced objects.
A one-layer network $\pi: \mathbb R^H \to \mathbb R$ is trained to classify the return value from the correct argument through cross entropy loss $L_4$:
\begin{align}
    L_5 = -\log \frac{\exp(\pi(r^*))}{\sum_{r_o\in r_{\mathbf o}} \exp(\pi(r_o))},
\end{align}
where $r^*$ is the return value of the correct $\lambda$ call
\begin{align}
    r^* = \lambda([\theta_f', c_1, ... c_M, a_0, ... o^*, ... a_N]).
\end{align}
For inference, we simply predict the object with the maximum logit score to repair the misused argument.

\section{Time and Space Complexity of NI}
\label{appx:complexity}

\begin{theorem}[Time Complexity of NI]\label{prop:time}
Discount the initialization time and assume that execution of a neural network representing a function has $\mathcal O(1)$ time complexity. NI has $\mathcal O(N)$ time complexity for source code with length $N$.
\end{theorem}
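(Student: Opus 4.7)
The plan is to bound the work done by each of the four compiler components in terms of the source code length $N$ and then sum them. First, I would observe that the Parser (Tree-sitter) produces an AST whose total node count is $\mathcal{O}(N)$, since each AST node corresponds to a bounded-length lexical construct of the source. Thus any operation that visits AST nodes a constant number of times contributes only $\mathcal{O}(N)$.

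Next, I would show that the Code Generator visits each AST node $\mathcal{O}(1)$ times. This is where the linearity principle is essential: loops and each control-flow branch are code-generated exactly once, and function definitions are neurally compiled exactly once into their signature vector $\theta_f'$, independently of how many call-sites they have (recursive calls within a definition use the guessed representation rather than stepping back into the body). Hence the number of invocations to the Neural Interpreter is $\mathcal{O}(N)$.

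Then I would verify that each Neural Instruction executes in $\mathcal{O}(1)$. The \texttt{store} and \texttt{lookup} instructions are amortized $\mathcal{O}(1)$ hash table operations on the scope stack (using the named representation principle so that, e.g., list elements do not create per-element entries). The \texttt{lambda} instruction is $\mathcal{O}(1)$ by the stated assumption that executing the neural network representing a function takes constant time; the argument list and control flow context stack are truncated to a constant (see the truncation constants in Appx.~\ref{appx:dataset}), so the Executor input has bounded length. The \texttt{guess} instruction is executed on-demand per expression and, excluding the one-time initialization cost of the Guesser Transformer on the source string (which is discounted), the per-expression pooling in Eq.~\ref{eq:pool} touches only a bounded number of token encodings.

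Summing: $\mathcal{O}(N)$ AST nodes, each triggering $\mathcal{O}(1)$ Neural Instructions of $\mathcal{O}(1)$ cost, yields overall $\mathcal{O}(N)$ time. The main obstacle I anticipate is making precise the claim that \texttt{guess} is $\mathcal{O}(1)$ per use: the Guesser Transformer is called once on the full input and then its outputs are pooled per expression, so one must either discount this global forward pass (consistent with the theorem's initialization caveat) or appeal to the fixed 512-token truncation to bound it by a constant; either way the per-expression pooling itself is over a bounded token span. A secondary care point is neural compilation of function definitions, where one must argue that compiling $F$'s body still fits within the $\mathcal{O}(N)$ budget because the body's AST nodes are counted in $N$ and are visited only during this single compilation pass.
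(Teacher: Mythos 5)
Your proposal is correct and follows essentially the same route as the paper's proof, which argues in two sentences that the linearity principle guarantees no AST node is executed twice and that the $\mathcal O(N)$ function calls each take constant time; your additional points about discounting the Guesser's global forward pass and bounding \texttt{lookup}/\texttt{store} as hash operations match remarks the paper makes immediately after its proof. No gaps.
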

\begin{proof}
No statement (AST node) is executed twice by \mbox{NI} due to the linearity principle. A source code with length $N$ has $\mathcal O(N)$ function calls, each taking constant time.
\end{proof}

The time complexity of Neural Interpretation (Thm. \ref{prop:time}) discounts the initialization time from the Guesser, because the Guesser is chosen to be a Transformer by our implementation, which has $\mathcal O(N^2)$ complexity for source code with length $\mathcal O(N)$. In the Neural Interpretation architecture, the Transformer can be swapped out to other neural models that may be faster, so $\mathcal O(N^2)$ initialization time an engineering artifact and is not essential to NI's time complexity. Execution complexity is essential to Neural Interpretation, which is $\mathcal O(N)$. In practice, the Guesser Transformer is only run once per input, and the Executor is run every time a \texttt{lambda} call is needed, so the quadratic complexity of Guesser may be a small overhead, especially with a combination of a small Guesser Transformer and a large Executor Transformer.

The $\lambda$-Executor can execute \texttt{lambda} calls in a $\mathcal O(1)$ time, if the length of the input is bounded by a constant. That is, the number of arguments and the levels of nested control flows of a function call is $\mathcal O(1)$. For most of the Python programs in the wild, this assumption is true. 

Linear execution time complexity has the potential to be a faster execution model than Transformers, which typically have $\mathcal O(N^2)$ complexity. Of course, if the Guesser's initialization time is not discounted, then Neural Interpretation's time complexity is still bottle-necked by the Transformer's time complexity.

\begin{theorem}[Space Complexity of NI]\label{prop:space}
Considering only the Neural Interpreter's memory space, NI has $\mathcal O(N)$ space complexity for source code with length $N$.
\end{theorem}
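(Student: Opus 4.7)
The plan is to bound separately each data structure that the Neural Interpreter maintains at runtime and show that together they consume only $\mathcal O(N)$ space. By the named representation principle, every variable that is currently live in the Interpreter is stored as a single vector of constant dimension $H$, so it suffices to bound the total count of such vectors plus the bookkeeping overhead of scopes and control flow contexts.

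First, I would argue that the number of distinct named identifiers that can ever appear in the source code of length $N$ is at most $\mathcal O(N)$, since each identifier occupies at least one character (in fact at least one token) of the source. Combined with the named representation principle, this bounds the total number of variable-vector entries across the entire scope stack by $\mathcal O(N)$, and each such entry uses $\mathcal O(1)$ space (one constant-length key string is permissible under a standard word-RAM model, and the value is a fixed-size vector). Derived expressions such as \texttt{lst[0]} are computed on demand by built-in functions of the Executor and are not stored in the Interpreter memory, so they do not add to this count.

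Next, I would bound the auxiliary stacks. The scope stack depth is at most the lexical nesting depth of the source code, which is $\mathcal O(N)$. The control flow context stack depth is likewise bounded by the nesting depth of control flow constructs, again $\mathcal O(N)$, and each entry is a single context vector of constant size by construction of the Code Generator routine for control flows (as illustrated in Appx.~\ref{appx:engineering}). Crucially, the linearity principle guarantees that no code block is executed more than once, so loops do not cause repeated accumulation of variables or contexts beyond what the source code lexically specifies.

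Summing the contributions --- $\mathcal O(N)$ variable entries, $\mathcal O(N)$ scope frames, and $\mathcal O(N)$ control flow context frames, each of constant size --- yields the claimed $\mathcal O(N)$ bound. The main subtlety, and the step I would be most careful about, is justifying that the ``neurally compiled'' function signatures stored in the Interpreter memory for function definitions remain constant-size objects: this follows because neural compilation maps each signature to a single vector $\theta_f'$ of dimension $H$ (as described in Appx.~\ref{appx:engineering}), not to a growing tuple, and because each function definition contributes at most one such signature entry. Everything else --- Guesser activations, transient Executor tensors, and the AST itself --- is explicitly excluded from the Neural Interpreter's memory per the statement of the theorem.
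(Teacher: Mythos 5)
Your proposal is correct and rests on the same core argument as the paper's proof, which simply observes that every representation stored in the Interpreter memory is associated with a variable name and that the number of variable names is $\mathcal O(N)$ for source code of length $N$. Your additional bookkeeping of the scope stack, control flow context stack, and the constant-size compiled signatures is a more careful elaboration of that same idea rather than a different route.
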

\begin{proof}
Every representation stored in the memory is associated with a variable name, and the number of variable names is $\mathcal O(N)$ w.r.t. source code of length $N$.
\end{proof}

Neural Interpreter's memory space is all that is needed for Neuro-Symbolic Execution, so we only consider its space complexity. For engineering, Neural Interpretation may use additional space for neural parameters, training, and analysis. They are not considered in this space complexity. The engineering components that correspond to this space complexity are: 1) the Guesser Transformer's output; 2) Guesser's pooled representation for each value; 3) the last Executor output for each variable encountered. All three have $\mathcal O(N)$ space complexity.

\section{Missingness and guessing}
\label{appx:missing}

We think that missingness in generic source code is unavoidable. One example of a highly missing source code is a single statement:

\texttt{fahrenheit = (celsius * 1.8) + 32}

If this statement is executed in a real Python interpreter, we will get \texttt{NameError: name 'celsius' is not defined}. Compared to this example, the \texttt{celcius\_to\_fahrenheit} function in Fig. \ref{fig:mental} is a valid Python program that will be accepted by a real Python interpreter, because \texttt{celsius} is a parameter of a function definition.

Nevertheless, Neural Interpretation can abstractly execute this invalid statement. In fact, Neural Interpretation can execute any script of linear statements with some statements removed, as long as the script can be parsed and Code Generator recognizes the syntax. Of course, the Neuro-Symbolic Execution results would not reflect the correct semantics if some critical statements are removed.

\section{Neural network hyperparameters}
\label{appx:network}
We use the CodeBERT \cite{feng2020codebert} tokenizer and architecture, a 12-layer BERT model using Transformer Encoder. When fine-tuning we load the CodeBERT \cite{feng2020codebert} weights for initialization.

Huggingface's default parameters are used. The optimizer is AdamW \cite{loshchilov2017decoupled}. The learning rate is set to be 5e-5. A linear learning rate scheduler with 5\% warm up is used. Mixed precision training is used. Gradient check pointing is enabled and is vital to reduce $\lambda$-Executor memory consumption. 

As a first model aiming to prove feasibility, we do not tune Neural Interpretation aggressively. Validation set is only used to verify that in variable misuse call localization, filtering calls with positive predicted contamination score to construct the data-flow graph has sensible validation accuracy compared to other thresholds of contamination scores. After training of all experiments, the last checkpoint is used, and early stopping is not applied. 

As mentioned in the paper, for experiments in Section \ref{sec:learning} ``Learning to Model the Execution of Generic Python Code'', a batch size of 16 is used, which defines the sampling diversity of negative examples in the cross entropy (Appx. \ref{appx:loss}). For each of the three abstract semantics (return variable classification, argument discrimination, data-flow discrimination), the classification loss is computed for 64 samples per input batch. For experiments in Section \ref{sec:misuse} ``White-box variable misuse localization and repair'', the batch size is set to be 64.

The truncation methods and hyperparameters are discussed in Appx. \ref{appx:dataset}.

\section{Performance of Neural Interpretation}
Neural Interpretation is the first Neuro-Symbolic Execution model. It is the first model of computers that achieve the following five desiderata:
1) neural representation learning from name semantics; 2) white-box step-by-step execution; 3) execution of partial source code  without concrete inputs; 4) execution of general-domain source code; 5) composition of neural networks ``programmed'' according to the source code.

We do not emphasize performance in terms of accuracy as a desideratum to be pursued by Neural Interpretation, because there are no baseline models in the literature that can achieve the five desiderata above. They are qualitative differences that put Neural Interpretation in a different category. In our view, trading accuracy for the five desiderata is worthwhile, as neuro-symbolic reasoning is a major challenge in our path toward artificial general intelligence, and solving neuro-symbolic reasoning is not about incrementally improving accuracy with short-term results. Qualitative properties prevent deep learning methods today from reaching artificial general intelligence, and it is more important to deliver qualitative AGI features such as neuro-symbolic reasoning than improving accuracy incrementally. Neural Interpretation delivers the five desiderata, which are all qualitative features we expect to establish when moving toward AGI.

A first work in a new category may not immediately become the state-of-the-art. It often takes a whole community of efforts with iterations of research results to improve the performance. Transformers \cite{vaswani2017attention}, for example, have been improved by the whole community since its invention 5 years ago. Neural networks were invented 80 years ago \cite{mcculloch1943logical} and did not become the state-of-the-art until the recent 20 years \cite{krizhevsky2017imagenet}. 
Neural Interpretation has just been introduced in this paper, and it has great potential for future improvements.

\section{Is Neural Interpretation an interpreter or a compiler?}
\label{appx:interpreter}

Neural Interpretation is an interpreter, because it executes every statement immediately as the AST node is encountered during code generation. Although it is possible for Neural Interpretation to output the intermediate code in neural instructions (e.g. Fig. \ref{fig:simple}) as a compiler would to machine code, it does not explicitly do so because interpretation is simpler.

Neural Compilation in this paper refers to the compilation of a function definition to vector representations by executing it abstractly. Neural Compilation is compilation in the sense that it translates textual source code to an executable that is a currying neural network (Eq. \ref{eq:return}). Neural Interpretation performs Neural Compilation when a function definition is encountered during interpretation and does not separately compile them beforehand.

\section{Inductive bias and differentiability}

The Parser and Code Generator of Neural Interpretation handles the syntactical grammars, which, in a way, enforces an ``invisible'' structural prior onto the Guesser and Executor neural networks. The Executor, especially, never sees or needs to learn the grammar of Python. The syntax is taken care of by the Code Generator and Interpreter, such that $\lambda$ has a homogenous interface for all Python constructs that need to be abstractly executed. The Executor knows the semantics of a Python syntactical construct. For example, it can differentiate multiplication and addition, and it can differentiate a while-loop and a for-loop contextual vectors, because the currying signatures of these built-in functions correspond to different embeddings (see Appx. \ref{appx:build_in}). 

Although Neural Execution emulates program execution, in a traditional sense, program execution is a symbolic computation and not differentiable. Why can Neural Interpretation learn through gradient descent? The back-propagation computational graph (e.g. autograd in PyTorch) that trains the $\lambda$-Executor is similar to that of a recurrent neural network, only with a non-linear structure with branching and merging based on the data-flow specified by the source code. All operations are differentiable, and the undifferentiable syntactical constructs are handled by the other components and not learned by the $\lambda$-Executor, and that's why $\lambda$-Executor can learn differentiably with gradient descent.

\section{Built-in functions}
\label{appx:build_in}

For some built-in functions (including operators), we do not use the Guesser to initialize their abstract semantics. Instead, they are looked up from an embedding matrix, such that they have the same representations for all scripts, regardless of the context. The looked-up embedding is used as the signature parameter for currying of $\lambda$-Executor (Eq. \ref{eq:return}). These built-in functions may be built-in functions in Python as well, such as \texttt{+}, \texttt{*}, while others are specific for Neural Interpretation, such as \texttt{\_\_while\_\_} to transform the abstract semantics of the while-statement's condition to a contextual vector given to the Executor.

We list all 69 built-in functions and constants of Neural Interpretation below. The order is arbitrary, as the built-in function is dynamically assigned an incrementing embedding index when first encountered during training. Note in this list that \texttt{\_\_compile\_function\_\_} is for Neural Compilation. \texttt{\_\_while\_\_} and \texttt{\_\_for\_in\_\_} are for control flow contexts. \texttt{val\_obj\_val\_default} is to initialize the guessed representation of an object if corresponding tokens are not found for pooling. \texttt{\_\_subscript\_\_} is \texttt{\_\_list\_index\_\_} mentioned in Methods section to follow standard Python terminology.
\begin{itemize}[topsep=-5pt,itemsep=-1ex,partopsep=1ex,parsep=1.3ex, leftmargin=2ex]
\item \texttt{and} 
\item \texttt{fun\_obj\_val\_default} 
\item \texttt{val\_obj\_val\_default} 
\item \texttt{const\_obj\_tensor\_default} 
\item \texttt{\_\_try\_\_} 
\item \texttt{\_\_except\_\_} 
\item \texttt{\_\_tuple\_of\_\_} 
\item \texttt{\_\_compile\_function\_\_} 
\item \texttt{\_\_dictionary\_key\_value\_\_} 
\item \texttt{==} 
\item \texttt{<} 
\item \texttt{-} 
\item \texttt{\_\_if\_\_} 
\item \texttt{\_\_dictionary\_of\_\_} 
\item \texttt{\_\_else\_\_} 
\item \texttt{\_\_list\_of\_\_} 
\item \texttt{\%} 
\item \texttt{not} 
\item \texttt{\_\_keyword\_argument\_\_} 
\item \texttt{\_\_get\_attr\_\_} 
\item \texttt{\_\_subscript\_\_} 
\item \texttt{\_\_list\_splat\_\_} 
\item \texttt{\_\_dictionary\_splat\_\_} 
\item \texttt{+} 
\item \texttt{in} 
\item \texttt{\_\_for\_in\_\_} 
\item \texttt{\_\_default\_parameter\_\_} 
\item \texttt{+=} 
\item \texttt{\_\_end\_for\_iterator\_\_} 
\item \texttt{\_\_unpack\_k\_\_} 
\item \texttt{\_\_slice\_\_} 
\item \texttt{is} 
\item \texttt{\_\_generator\_\_} 
\item \texttt{*} 
\item \texttt{/} 
\item \texttt{<=} 
\item \texttt{>} 
\item \texttt{\_\_conditional\_expression\_\_} 
\item \texttt{or} 
\item \texttt{!=} 
\item \texttt{\_\_subscript\_assign\_\_} 
\item \texttt{>=} 
\item \texttt{\_\_expression\_list\_of\_\_} 
\item \texttt{|=} 
\item \texttt{**} 
\item \texttt{\_\_set\_of\_\_} 
\item \texttt{\_\_while\_\_} 
\item \texttt{\_\_list\_comprehension\_\_} 
\item \texttt{\_\_if\_clause\_\_} 
\item \texttt{>>} 
\item \texttt{\&} 
\item \texttt{<<} 
\item \texttt{|} 
\item \texttt{\_\_dictionary\_comprehension\_\_} 
\item \texttt{-=} 
\item \texttt{//} 
\item \texttt{\_\_finally\_\_} 
\item \texttt{*=} 
\item \texttt{\&=} 
\item \texttt{/=} 
\item \texttt{\^{}} 
\item \texttt{>>=} 
\item \texttt{\textasciitilde} 
\item \texttt{\_\_parenthesis\_\_} 
\item \texttt{<>} 
\item \texttt{<<=} 
\item \texttt{\%=} 
\item \texttt{\^{}=} 
\item \texttt{//=} 
\end{itemize}

\end{document}